\def\equationautorefname~#1\null{Equation~(#1)\null}
\newtheoremstyle{definition}
{3pt} 
{5pt} 
{} 
{} 
{\bfseries} 
{.} 
{.5em} 
{} 
\theoremstyle{definition}
\newtheorem{theorem}{Theorem}[section]
\newtheorem{proposition}[theorem]{Proposition}
\newtheorem{corollary}[theorem]{Corollary}
\newtheorem{definition}[theorem]{Definition}
\newtheorem{lemma}[theorem]{Lemma}
\newtheorem{example}[theorem]{Example}
\newtheorem{remark}[theorem]{Remark}
\declaretheoremstyle[%
spaceabove=3pt,%
spacebelow=6pt,%
headfont=\normalfont\itshape,%
postheadspace=1em,%
qed=\qedsymbol%
]{mystyle} 
\declaretheorem[name={Proof},style=mystyle,unnumbered,
]{prf}
\DeclareMathOperator{\arctantwo}{arctan2}
\newcommand{\wassersteinoperator}{T_K}
\newcommand{\cA}{\mathcal{A}}
\newcommand{\cD}{\mathcal{D}}
\newcommand{\cL}{\mathcal{L}}
\newcommand{\cX}{\mathcal{X}}
\title{MICo: Improved representations via sampling-based state similarity for Markov decision processes}
\author{
  Pablo Samuel Castro\thanks{Equal contribution. Correspondence to Pablo Samuel Castro: psc@google.com.} \\
  \small{Google Research, Brain Team}
  \And
  Tyler Kastner$^*$ \\
  \small{McGill University}
  \And
  Prakash Panangaden \\
  \small{McGill University}
  \And
  Mark Rowland \\
  \small{DeepMind}
}
\begin{document}

\maketitle

\begin{abstract}
  We present a new behavioural distance over the state space of a Markov decision
process, and demonstrate the use of this distance as an effective means of
shaping the learnt representations of deep reinforcement learning agents.
While existing notions of state similarity are typically difficult to learn at
scale due to high computational cost and lack of sample-based algorithms, our
newly-proposed distance addresses both of these issues. In addition to
providing detailed theoretical analysis, we provide empirical evidence that
learning this distance alongside the value function yields structured and
informative representations, including strong results on the Arcade Learning
Environment benchmark.

\end{abstract}

\section{Introduction}

The success of reinforcement learning (RL) algorithms in large-scale, complex tasks depends on forming useful representations of the environment with which the algorithms interact. Feature selection and feature learning has long been an important subdomain of RL, and with the advent of deep reinforcement learning there has been much recent interest in understanding and improving the representations learnt by RL agents.

Much of the work in representation learning has taken place from the perspective of \emph{auxiliary tasks} \citep{jaderberg2016reinforcement,bellemare2017distributional,fedus2019hyperbolic}; in addition to the primary reinforcement learning task, the agent may attempt to predict and control additional aspects of the environment. 
Auxiliary tasks shape the agent's representation of the environment \emph{implicitly}, typically via gradient descent on the additional learning objectives. As such, while auxiliary tasks continue to play an important role in improving the performance of deep RL algorithms, our understanding of the effects of auxiliary tasks on representations in RL is still in its infancy.

In contrast to the implicit representation shaping of auxiliary tasks, a
separate line of work on \emph{behavioural metrics}, such as bisimulation metrics
\citep{Desharnais99b,Desharnais04,Ferns04,ferns06methods}, aims to capture
structure in the environment by learning a metric measuring behavioral similarity
between states.  Recent works have successfully
used behavioural metrics to shape the representations of deep RL agents
\citep{gelada2019deepmdp,zhang2021invariant,agarwal2021contrastive}. However, in
practice behavioural metrics are difficult to estimate from both statistical
and computational perspectives, and these works either rely on specific
assumptions about transition dynamics to make the estimation
tractable, and as such can only be applied to limited classes of environments,
or are applied to more general classes of environments not covered by
theoretical guarantees.

The principal objective of this work is to develop new measures of behavioral
similarity that avoid the statistical and computational difficulties described
above, and simultaneously capture richer information about the environment. We
introduce the \emph{MICo ({\bf M}atching under {\bf I}ndependent {\bf Co}uplings) distance}, and develop the theory around its
computation and estimation, making comparisons with existing metrics on the
basis of computational and statistical efficiency.
We demonstrate the usefulness of the representations that MICo yields, both through empirical evaluations in small problems (where we can compute them exactly) as well as in two large benchmark suites: (1) the Arcade Learning Environment \citep{bellemare13arcade,machado2018revisiting}, in which the performance of a wide variety of existing value-based deep RL agents is improved by directly shaping representations via the MICo distance (see \autoref{fig:humanNormalizedIQM}, left), and (2) the DM-Control suite \citep{tassa18dmcontrol}, in which we demonstrate it can improve the performance of both Soft Actor-Critic \citep{haarnoja18sac} and the recently introduced DBC \citep{zhang2021invariant} (see \autoref{fig:humanNormalizedIQM}, right).

\section{Background}
\label{sec:background}
\begin{figure*}[!t]
	\centering
	\includegraphics[keepaspectratio,width=0.45\textwidth]{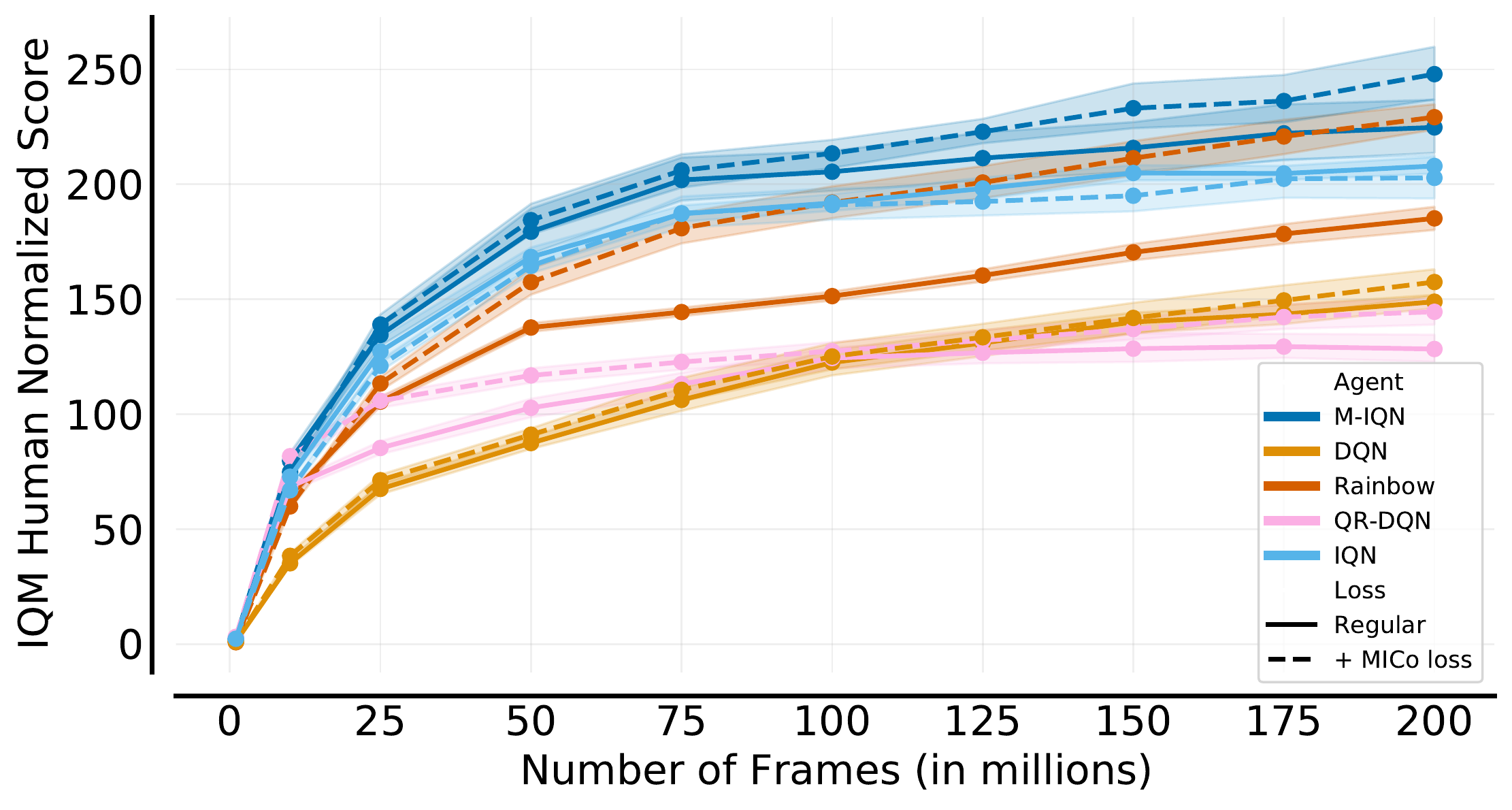}
	\includegraphics[keepaspectratio,width=0.45\textwidth]{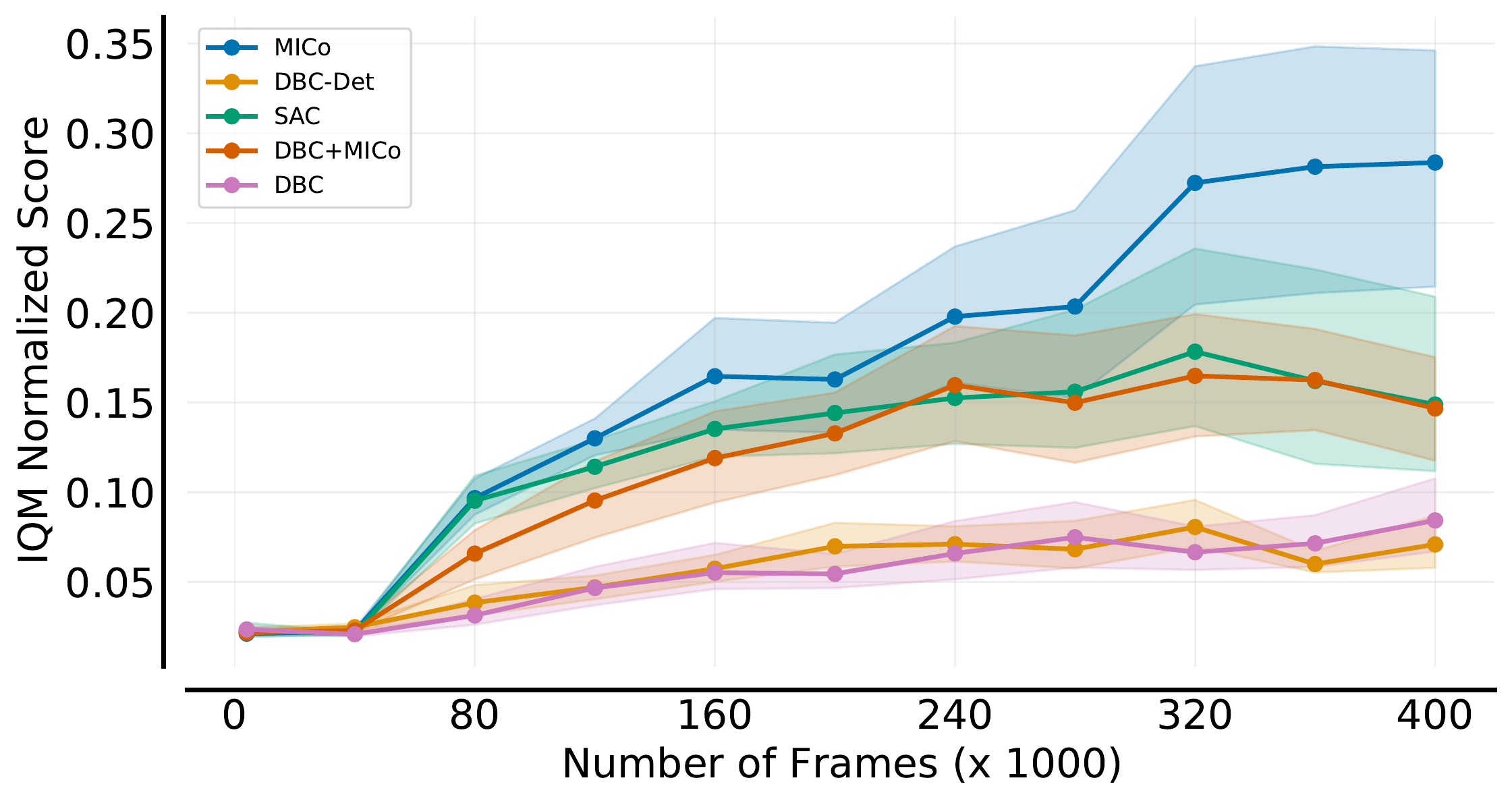}
  \caption{Interquantile Mean human normalized scores of all the agents and losses on the ALE suite (left) and on the DM-Control suite (right), both run with five independent seeds for each agent and environment. In both suites MICo provides a clear advantage.}
  \vspace{-0.5cm}
	\label{fig:humanNormalizedIQM}
\end{figure*}

Before describing the details of our contributions, 
we give a brief overview of the required background in reinforcement learning and bisimulation. We provide more extensive background in \autoref{sec:extendedBackground}.

{\bf Reinforcement learning. } We consider a Markov decision process $(\mathcal{X}, \mathcal{A}, \gamma, P,
r)$ defined by a finite state space $\mathcal{X}$, finite action space
$\mathcal{A}$, transition kernel $P : \mathcal{X} \times \mathcal{A}
\rightarrow \mathscr{P}(\cX)$, reward function $r : \mathcal{X} \times
\mathcal{A} \rightarrow \mathbb{R}$, and discount factor $\gamma \in [0,1)$.
For notational convenience we will write $P_x^a$ and $r_x^a$ for transitions
and rewards, respectively. Policies are mappings from states to distributions
over actions: $\pi \in \mathscr{P}(\mathcal{A})^\mathcal{X}$ and induce a {\em value
function} $V^{\pi}:\cX\rightarrow\mathbb{R}$ defined via the recurrence:
$V^{\pi}(x) := \mathbb{E}_{a\sim\pi(x)}\left[ r_x^a + \gamma\mathbb{E}_{x'\sim
P_x^a}[V^{\pi}(x')]\right]$. In RL we are concerned with finding the optimal
policy $\pi^* = \arg\max_{\pi\in\mathscr{P}(\mathcal{A})^\mathcal{X}}V^{\pi}$ from interaction with sample trajectories with an MDP, {\em without knowledge of $P$ or $r$} (and sometimes not even $\cX$), and the optimal value function
$V^*$ induced by $\pi^*$.

{\bf State similarity and bisimulation metrics.} Various notions of similarity between states in MDPs have been considered in the RL literature, with applications in policy transfer, state aggregation, and representation learning. The \emph{bisimulation metric} \citep{Ferns04} is of particular relevance for this paper, and defines state similarity in an MDP by declaring two states $x,y \in \mathcal{X}$ to be close if their immediate rewards are similar, and the transition dynamics at each state leads to next states which are also judged to be similar. This self-referential notion is mathematically formalised by defining the bisimulation metric $d^\sim$ as the unique fixed-point of the operator $T_K : \mathcal{M}(X) \rightarrow \mathcal{M}(X)$, where $\mathcal{M}(X) = \{ d \in [0,\infty)^{\mathcal{X} \times \mathcal{X}} : d \text{ symmetric and satisfies the triangle inequality} \}$ is the set of pseudometrics on $\mathcal{X}$, given by $T_K(d)(x,y) = \max_{a\in\cA}[ |r^a_x - r^a_{y}| + \gamma W_d(P^a_x, P^a_{y})]$.
Here, $W_d$ is the Kantorovich distance (also known as the Wasserstein distance) over the set of distributions $\mathscr{P}(\mathcal{X})$ with base distance $d$, defined by
$W_d(\mu, \nu) = \inf_{X \sim \mu, Y \sim \nu}\mathbb{E}[d(X,Y)]$, for all $\mu, \nu \in \mathscr{P}(\mathcal{X})$, 
where the infimum is taken over all couplings of $(X, Y)$ with the prescribed marginals \citep{villani08optimal}.

The mapping $T_K$ is a $\gamma$-contraction on $\mathcal{M}(X)$ under the $L^\infty$ norm \citep{ferns2011bisimulation}, and thus by standard contraction mapping arguments analogous to those used to study value iteration, it has a unique fixed point, the bisimulation metric $d^\sim$. \citet{Ferns04} show that this metric bounds differences in the optimal
value function, hence its importance in RL:
\begin{align}\label{eqn:bisimUpperBound}
	|V^*(x) - V^*(y)| \leq d^\sim(x, y) \quad	\forall x,y\in\cX \, .
\end{align}

{\bf Representation learning in RL.} In large-scale environments, it is infeasible to express value functions directly as vectors in $\mathbb{R}^{\mathcal{X} \times \mathcal{A}}$. Instead, RL agents must approximate value functions in a more concise manner, by forming a \emph{representation} of the environment, that is, a feature embedding $\phi : \mathcal{X} \rightarrow \mathbb{R}^M$, and predicting state-action values linearly from these features. \emph{Representation learning} is the problem of finding a useful representation $\phi$.
Increasingly, deep RL agents are equipped with additional losses to aid representation learning. A common approach is to require the agent to make additional predictions (so-called \emph{auxilliary tasks}) with its representation, typically with the aid of extra network parameters, with the intuition that an agent is more likely to learn useful features if it is required to solve many related tasks. We refer to such methods as \emph{implicit} representation shaping, since improved representations are a side-effect of learning to solve auxiliary tasks.

Since bisimulation metrics capture additional information about the MDP in addition to that summarised in value functions, bisimulation metrics are a natural candidate for auxiliary tasks in deep reinforcement learning. \citet{gelada2019deepmdp}, \citet{agarwal2021contrastive}, and \citet{zhang2021invariant} introduce auxiliary tasks based on bisimulation metrics, but require additional assumptions on the underlying MDP in order for the metric to be learnt correctly (Lipschitz continuity, deterministic, and Gaussian transitions, respectively). The success of these approaches provides motivation in this paper to introduce a notion of state similarity applicable to arbitrary MDPs, without further restriction. Further, we learn this state similarity {\em explicitly}: that is, without the aid of any additional network parameters.

\section{Advantages and limitations of the bisimulation metric}\label{sec:bis}
The bisimulation metric $d^\sim$ is a strong notion of distance on the state space of an MDP; it is 
useful in policy transfer through its bound
on optimal value functions \citep{castro10using} and because it is so stringent, it gives good
guarantees for state aggregations \citep{Ferns04,li2006towards}.  However, it has been difficult to use at
scale and compute online, for a variety of reasons that we summarize below.

(i) \textbf{Computational complexity.} The metric can be computed via fixed-point
iteration since the operator $T_K$ is a contraction mapping. The map $\wassersteinoperator$ contracts at rate
$\gamma$ with respect to the $L^\infty$ norm on $\mathcal{M}$, and therefore
obtaining an $\varepsilon$-approximation of $d^\sim$ under this norm requires
$O(\log(1/\varepsilon) / \log(1/\gamma))$ applications of $\wassersteinoperator$ to
an initial pseudometric $d_0$. The cost of each application of $\wassersteinoperator$
is dominated by the computation of $|\mathcal{X}|^2|\mathcal{A}|$ $W_d$
distances for distributions over $\mathcal{X}$, each costing
$\tilde{O}(|\mathcal{X}|^{2.5})$ in theory \citep{lee2014path}, and
$\tilde{O}(|\mathcal{X}|^3)$ in practice
\citep{pele2009fast,guo2020fast,peyre2019computational}. Thus, the overall
practical cost is
$\tilde{O}(|\mathcal{X}|^{5}|\mathcal{A}|\log(\varepsilon) / \log(\gamma))$.

(ii) \textbf{Bias under sampled transitions.} Computing $\wassersteinoperator$
requires access to the transition probability distributions $P_x^a$ for
each $(x, a) \in \mathcal{X} \times \mathcal{A}$ which, as mentioned in \autoref{sec:background}, are typically not available; instead, stochastic approximations to the operator of interest are employed. Whilst there has been work in studying online, sample-based
approximate computation of the bisimulation metric \citep{ferns06methods,comanici2012fly}, these
methods are generally biased, in contrast to sample-based estimation of standard RL operators.

(iii) \textbf{Lack of connection to non-optimal policies.} One of the principal behavioural characterisations of the bisimulation metric $d^\sim$ is the upper bound shown in \autoref{eqn:bisimUpperBound}.
However, in general we do not have $|V^\pi(x) - V^\pi (y)| \leq d^\sim(x, y)$ for arbitrary policies $\pi \in \Pi$; a simple example is illustrated in \autoref{fig:ex3}. More generally, notions of state similarity that the bisimulation metric encodes may not be closely related to behavioural similarity under an arbitrary policy $\pi$. Thus, learning about $d^\sim$ may not in itself be useful for large-scale reinforcement learning agents.

\begin{wrapfigure}{r}{0.5\textwidth}
	\centering
	\includegraphics[keepaspectratio,width=.4\textwidth]{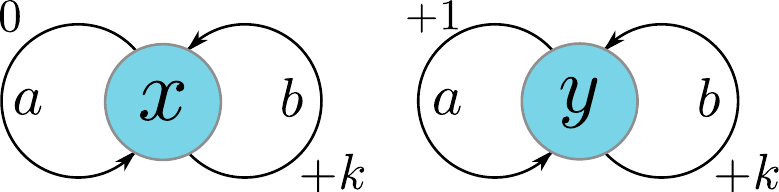}
	\caption{MDP illustrating that the upper bound for any $\pi$ is not generally satisfied. Here, $d^\sim(x, y) = (1-\gamma)^{-1}$, but for $\pi(b|x)=1, \pi(a|y) = 1$, we have $|V^\pi(x) - V^\pi(y)| = k(1-\gamma)^{-1}$.}
	\label{fig:ex3}
\end{wrapfigure}
\vspace{-0.5em}

Property (i) expresses the intrinsic computational difficulty of computing this
metric. Property (ii) illustrates the problems associated with
attempting to move from operator-based computation to online, sampled-based computation of the
metric (for example, when the environment dynamics are unknown). Finally,
property (iii) shows that even if the metric is computable exactly, the
information it yields about the MDP may not be practically useful. Although
$\pi$-bisimulation (introduced by \citet{castro2020scalable} and extended by
\citet{zhang2021invariant}) addresses property (iii), their 
practical algorithms are limited to MDPs with deterministic transitions \citep{castro2020scalable} or
MDPs with Gaussian transition kernels \citep{zhang2021invariant}. Taken together, these three properties motivate the search for a metric without these
shortcomings, which can be used in combination with deep reinforcement learning.

\section{The MICo distance}
We now present a new notion of distance for state similarity, which we refer to
as \emph{MICo} ({\bf M}atching under {\bf I}ndependent {\bf Co}uplings), designed to overcome the drawbacks described above.

Motivated by the drawbacks described in \autoref{sec:bis}, we make several modifications to the operator $\wassersteinoperator$ introduced above: (i) in order to deal with the prohibitive cost of computing the Kantorovich distance, which optimizes over all coupling of the distributions $P_x^a$ and $P_y^a$, we use the independent coupling;
(ii) to deal with lack of connection to non-optimal policies, we consider an
on-policy variant of the metric, pertaining to a chosen policy $\pi \in \mathscr{P}(\mathcal{A})^\mathcal{X}$.  This leads us to the following definition.

\begin{definition}
	Given $\pi \in\mathscr{P}(\mathcal{A})^\mathcal{X}$, 
	the \emph{MICo update operator} $T^\pi_M : \mathbb{R}^{\mathcal{X}\times\mathcal{X}} \rightarrow \mathbb{R}^{\mathcal{X}\times\mathcal{X}}$ is:
	\begin{align}
		\label{eqn:uUpdate}
    (T^\pi_M U)(x, y)  = |r^\pi_x - r^\pi_y| + \gamma \mathbb{E}_{\begin{subarray}{l}x'\sim P^{\pi}_x \\ y'\sim P^{\pi}_y\end{subarray}} \left[ U(x', y') \right]
	\end{align}
	for all $U:\mathcal{X}\times\mathcal{X}\rightarrow\mathbb{R}$, with $r^\pi_x = \sum_{a \in \mathcal{A}} \pi(a|x) r_x^a$ and $P^{\pi}_x = \sum_{a\in\cA}\pi(a|x)P_x^a(\cdot)$ for all $x \in \mathcal{X}$.
\end{definition}

As with the bisimulation operator, this can be thought of as encoding desired
properties of a notion of similarity between states in a self-referential
manner; the similarity of two states $x, y \in \mathcal{X}$ should be determined
by the similarity of the rewards and the similarity of the states they lead to.

\begin{restatable}{proposition}{micoContraction}\label{prop:mico-contraction}
	The operator $T^\pi_M$ is a contraction mapping on $\mathbb{R}^{\mathcal{X}\times\mathcal{X}}$ with respect to the $L^\infty$ norm.
\end{restatable}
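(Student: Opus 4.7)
The plan is to follow the standard pattern used for the Bellman operator: take two arbitrary functions $U, U' : \mathcal{X}\times\mathcal{X} \to \mathbb{R}$, expand $(T^\pi_M U)(x,y) - (T^\pi_M U')(x,y)$ at a fixed pair of states $(x,y)$, observe that the reward terms $|r^\pi_x - r^\pi_y|$ are identical in both applications and cancel, and then bound the remaining expectation. This should yield a $\gamma$-contraction in $L^\infty$.

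More concretely, I would fix $(x,y) \in \mathcal{X} \times \mathcal{X}$ and write
\begin{align*}
(T^\pi_M U)(x,y) - (T^\pi_M U')(x,y) = \gamma\, \mathbb{E}_{x' \sim P^\pi_x,\, y' \sim P^\pi_y}\!\left[ U(x',y') - U'(x',y') \right].
\end{align*}
Taking absolute values and pulling them inside the expectation by Jensen (or simply the triangle inequality for finite sums, since $\mathcal{X}$ is finite) gives
\begin{align*}
\left| (T^\pi_M U)(x,y) - (T^\pi_M U')(x,y) \right| \le \gamma\, \mathbb{E}_{x' \sim P^\pi_x,\, y' \sim P^\pi_y}\!\left[ \left| U(x',y') - U'(x',y') \right| \right] \le \gamma\, \|U - U'\|_\infty,
\end{align*}
since each summand is uniformly bounded by $\|U - U'\|_\infty$ and the product measure $P^\pi_x \otimes P^\pi_y$ is a probability distribution on $\mathcal{X}\times\mathcal{X}$. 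Taking a supremum over $(x,y)$ on the left then yields $\|T^\pi_M U - T^\pi_M U'\|_\infty \le \gamma \|U - U'\|_\infty$, which is precisely the contraction statement.

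I do not anticipate a real obstacle here: the argument is essentially identical to the contraction proof for the policy-evaluation Bellman operator, with the key simplification (compared to the bisimulation case) being that we use the independent coupling $P^\pi_x \otimes P^\pi_y$ rather than an infimum over couplings, so no Kantorovich/Wasserstein duality is needed and the expectation moves through the absolute value in one line. The only minor point worth stating explicitly is that $\mathbb{R}^{\mathcal{X}\times\mathcal{X}}$ equipped with $\|\cdot\|_\infty$ is a complete metric space (immediate since $\mathcal{X}$ is finite), so that Banach's fixed point theorem applies and a unique fixed point of $T^\pi_M$ exists — this is what the rest of the paper will want to call the MICo distance.
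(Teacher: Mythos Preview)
Your proposal is correct and matches the paper's own proof essentially line for line: cancel the common reward term, bound the remaining expectation over the product measure by $\gamma\|U-U'\|_\infty$, and take the supremum over $(x,y)$. The only difference is cosmetic---the paper writes the expectation as an explicit sum over $\pi(a|x)\pi(b|y)P_x^a(x')P_y^b(y')$ rather than using the shorthand $P^\pi_x\otimes P^\pi_y$---and your additional remark about completeness and Banach's theorem is exactly what the paper states separately as Corollary~\ref{corr:mico-fp}.
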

\begin{prf}
See \autoref{sec:missingProofs}.
\end{prf}

The following corollary now follows immediately from Banach's fixed-point theorem and the completeness of $\mathbb{R}^{\mathcal{X}\times\mathcal{X}}$ under the $L^\infty$ norm.

\begin{corollary}\label{corr:mico-fp}
	The MICo operator $T^\pi_M$ has a unique fixed point $U^\pi \in \mathbb{R}^{\mathcal{X}\times\mathcal{X}}$, and repeated application of $T^\pi_M$ to any initial function $U \in \mathbb{R}^{\mathcal{X}\times\mathcal{X}}$ converges to $U^\pi$.
\end{corollary}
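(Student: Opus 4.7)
The plan is to verify the contraction property by direct computation, exploiting the fact that the reward terms in $T^\pi_M U$ depend only on $\pi$ and the MDP, not on the argument $U$, so they cancel when we compare $T^\pi_M U$ with $T^\pi_M U'$. This is the standard template used for showing the Bellman operator is a $\gamma$-contraction, and it carries over here essentially unchanged because the distribution $P^\pi_x \otimes P^\pi_y$ over $\mathcal{X}\times\mathcal{X}$ plays the role that $P^\pi_x$ does in the usual single-state argument.

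Concretely, I would fix arbitrary $U, U' \in \mathbb{R}^{\mathcal{X}\times\mathcal{X}}$ and an arbitrary pair $(x,y) \in \mathcal{X}\times\mathcal{X}$, and compute
\begin{align*}
(T^\pi_M U)(x,y) - (T^\pi_M U')(x,y) = \gamma \, \mathbb{E}_{\substack{x' \sim P^\pi_x \\ y' \sim P^\pi_y}}\bigl[U(x',y') - U'(x',y')\bigr],
\end{align*}
where the reward terms $|r^\pi_x - r^\pi_y|$ cancel. Taking absolute values and pulling the absolute value inside the expectation (Jensen, or simply the triangle inequality for expectations) gives a bound of $\gamma \, \mathbb{E}[\,|U(x',y') - U'(x',y')|\,] \leq \gamma \|U - U'\|_\infty$, since $|U(x',y') - U'(x',y')| \leq \|U - U'\|_\infty$ pointwise and the expectation of a constant is that constant.

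Taking the supremum over $(x,y)$ then yields $\|T^\pi_M U - T^\pi_M U'\|_\infty \leq \gamma \|U - U'\|_\infty$, which is the desired contraction with modulus $\gamma \in [0,1)$. There is no real obstacle here: unlike the bisimulation operator $T_K$, we do not have to reason about how the Kantorovich distance interacts with pointwise changes in the base pseudometric, because the independent coupling $P^\pi_x \otimes P^\pi_y$ is fixed (it does not depend on $U$). The only thing worth noting is that we are working on all of $\mathbb{R}^{\mathcal{X}\times\mathcal{X}}$ rather than on a subspace of pseudometrics, so we do not need to verify that $T^\pi_M$ preserves symmetry or the triangle inequality; completeness of $\mathbb{R}^{\mathcal{X}\times\mathcal{X}}$ under $\|\cdot\|_\infty$ then immediately delivers the fixed point asserted in Corollary~\ref{corr:mico-fp}.
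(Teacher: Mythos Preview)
Your proposal is correct and matches the paper's approach essentially line-for-line: the paper proves the contraction property (Proposition~\ref{prop:mico-contraction}) by exactly the cancellation-of-reward-terms argument you describe, and then states that the corollary follows immediately from Banach's fixed-point theorem and the completeness of $(\mathbb{R}^{\mathcal{X}\times\mathcal{X}}, \|\cdot\|_\infty)$.
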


Having defined a new operator, and shown that it has a corresponding fixed-point, there are two questions to address: Does this new notion of distance overcome the drawbacks of the bisimulation metric described above; and what does this new object tell us about the underlying MDP?

\subsection{Addressing the drawbacks of the bisimulation metric}

We introduced the MICo distance as a means of overcoming some of the shortcomings associated with the bisimulation metric, described in \autoref{sec:bis}. In this section, we provide a series of results that show that the newly-defined notion of distance addressess each of these shortcomings. The proofs of these results rely on the following lemma, connecting the MICo operator to a lifted MDP.
This result is crucial for much of the analysis that follows, so we describe the proof in full detail. 

\begin{lemma}[\textbf{Lifted MDP}]\label{lem:aux-mdp}
	The MICo operator $T^\pi_M$ is the Bellman evaluation operator for an auxiliary MDP.
\end{lemma}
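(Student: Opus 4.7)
The plan is to exhibit an explicit auxiliary MDP on the product state space whose policy-evaluation Bellman operator coincides with $T^\pi_M$. The form of \eqref{eqn:uUpdate} already suggests the construction: the update takes a function $U$ of pairs $(x,y)$, adds an immediate scalar $|r^\pi_x - r^\pi_y|$, and averages $U$ over next-state pairs $(x',y')$ drawn independently from the on-policy transition distributions at $x$ and $y$. This is precisely the shape of a Bellman evaluation on $\mathcal{X} \times \mathcal{X}$ with an independent-coupling transition kernel.

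Concretely, I would define the lifted MDP $\widetilde{\mathcal{M}} = (\widetilde{\mathcal{X}}, \widetilde{\mathcal{A}}, \gamma, \widetilde{P}, \widetilde{r})$ with $\widetilde{\mathcal{X}} = \mathcal{X} \times \mathcal{X}$, a trivial action set (so that the "policy" to be evaluated is the unique one and may be suppressed), discount $\gamma$, reward function
\[
\widetilde{r}(x,y) = |r^\pi_x - r^\pi_y|,
\]
and transition kernel given by the independent coupling of the on-policy kernels,
\[
\widetilde{P}\bigl((x',y') \,\big|\, (x,y)\bigr) = P^\pi_x(x')\, P^\pi_y(y').
\]
Then I would directly verify that for any $U : \widetilde{\mathcal{X}} \to \mathbb{R}$ the standard Bellman evaluation operator of $\widetilde{\mathcal{M}}$ applied to $U$ evaluates at $(x,y)$ to
\[
\widetilde{r}(x,y) + \gamma \sum_{x',y'} \widetilde{P}\bigl((x',y') \,\big|\, (x,y)\bigr)\, U(x',y'),
\]
which, after substituting the definitions of $\widetilde{r}$ and $\widetilde{P}$ and recognising the product-measure expectation as $\mathbb{E}_{x' \sim P^\pi_x,\, y' \sim P^\pi_y}[U(x',y')]$, is exactly $(T^\pi_M U)(x,y)$. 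Identifying the two operators on all $(x,y)$ and all $U$ establishes the claim.

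Because the statement is essentially a matter of unpacking the definition of $T^\pi_M$ and packaging it as an MDP, there is no real technical obstacle. The only thing to be careful about is that the independent-coupling choice really does yield a valid stochastic kernel on $\widetilde{\mathcal{X}}$ — normalization follows immediately from $\sum_{x'} P^\pi_x(x') = \sum_{y'} P^\pi_y(y') = 1$ — and that one is content to work with a degenerate action space (or, equivalently, to treat the lifted object as a Markov reward process). With this framing, the fixed point $U^\pi$ from Corollary~\ref{corr:mico-fp} is identified with the value function of $\widetilde{\mathcal{M}}$, which is the leverage point for the downstream results connecting MICo to behavioural similarity and to value-function bounds.
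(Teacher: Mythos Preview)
Your proposal is correct and follows essentially the same approach as the paper: construct a lifted MDP on $\mathcal{X}\times\mathcal{X}$ with reward $|r^\pi_x - r^\pi_y|$ and the independent-coupling transition kernel, then read off that its Bellman evaluation operator is $T^\pi_M$. The only difference is cosmetic: the paper keeps the action space $\widetilde{\mathcal{A}} = \mathcal{A}^2$ with transitions $\widetilde{P}^{(a,b)}_{(x,y)} = P^a_x \otimes P^b_y$ and evaluates under the product policy $\tilde{\pi}(a,b\mid x,y) = \pi(a\mid x)\pi(b\mid y)$, whereas you collapse the action space to a singleton and bake $\pi$ directly into the kernel $P^\pi_x \otimes P^\pi_y$. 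Since fixing a policy in an MDP yields a Markov reward process, the two constructions are equivalent; the paper's version makes the underlying action structure visible (useful for the remark connecting to \citet{ferns2014bisimulation}), while yours is marginally more direct for the lemma as stated.
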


\begin{prf}
	Given the MDP specified by the tuple $(\mathcal{X}, \mathcal{A}, P, R)$, we construct an auxiliary MDP $(\widetilde{\mathcal{X}},\widetilde{\mathcal{A}},
	\widetilde{P}, \widetilde{R})$, by taking the
	state space to be $\widetilde{\mathcal{X}} = \mathcal{X}^2$, the action space
	to be $\widetilde{\mathcal{A}} = \mathcal{A}^2$, the transition dynamics to be
    given by $\widetilde{P}_{(u, v)}^{(a, b)}((x,y)) = P_u^a(x)P_v^b(y)$ for
    all $(x,y), (u,v) \in \mathcal{X}^2$, $a,b \in \mathcal{A}$, and the action-independent rewards to be $\widetilde{R}_{(x,y)} = |r^\pi_x - r^\pi_y|$ for all $x, y \in \mathcal{X}$. The Bellman evaluation operator $\widetilde{T}^{\tilde{\pi}}$ for this auxiliary MDP at discount rate $\gamma$ under the policy $\tilde{\pi}(a,b|x,y) = \pi(a|x) \pi(b|y)$ is given by (for all $U \in \mathbb{R}^{\mathcal{X}\times\mathcal{X}}$ and $(x, y) \in \mathcal{X} \times\mathcal{X}$):
	\begin{align*}
    (\widetilde{T}^{\tilde{\pi}}U)(x,y) & = \widetilde{R}_{(x,y)}\! +\! \gamma\!\!\!\! \sum_{(x^\prime, y^\prime) \in \mathcal{X}^2} \!\!\!\!\!\! \widetilde{P}_{(x, y)}^{(a, b)}((x^\prime, y^\prime)) \tilde{\pi}(a,b|x,y) U(x^\prime, y^\prime) \\
    & = |r^\pi_x - r^\pi_y| + \gamma\!\!\!\! \sum_{(x^\prime, y^\prime) \in \mathcal{X}^2}  P^\pi_x(x^\prime)P_y^\pi(y^\prime)  U(x^\prime, y^\prime) = (T^\pi_MU)(x, y) \, . \qedhere
	\end{align*}
\end{prf}

\begin{remark}
	\citet{ferns2014bisimulation} noted that the bisimulation metric can be interpreted as the optimal value function in a related MDP, and that the functional $\wassersteinoperator$ of $T_K$ can be interpreted as a Bellman optimality operator. However, their proof was non-constructive, the related MDP being characterised via the solution of an optimal transport problem. In contrast, the connection described above is constructive, and will be useful in understanding many of the theoretical properties of MICo. \citet{ferns2014bisimulation} also note that the $W_d$ distance in the definition of $T_K$ can be upper-bounded by taking a restricted class of couplings of the transition distributions. The MICo metric can be viewed as restricting the coupling class precisely to the singleton containing the independent coupling.
\end{remark}

With \autoref{lem:aux-mdp} established, we can now address each of the points (i), (ii), and (iii) from \autoref{sec:bis}.

\textbf{(i) Computational complexity. }
The key result regarding the computational complexity of computing the MICo distance is as follows.

\begin{proposition}[\textbf{MICo computational complexity}]
	The computational complexity of computing an $\varepsilon$-approximation in $L^\infty$ to the MICo metric is $O(|\mathcal{X}|^4 \log(\varepsilon) / \log(\gamma))$.
\end{proposition}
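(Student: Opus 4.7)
The plan is to mirror the standard value-iteration complexity argument, using the fact that $T^\pi_M$ is a $\gamma$-contraction on $\mathbb{R}^{\mathcal{X}\times\mathcal{X}}$ under the $L^\infty$ norm (\autoref{prop:mico-contraction}), and then carefully bounding the per-iteration cost. The total cost is the product of (a) the number of iterations required to reach error $\varepsilon$, and (b) the cost of a single application of $T^\pi_M$.

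First I would handle the iteration count. Starting from any bounded initialization $U_0 \in \mathbb{R}^{\mathcal{X}\times\mathcal{X}}$ and iterating $U_{k+1} = T^\pi_M U_k$, the contraction property yields $\|U_k - U^\pi\|_\infty \le \gamma^k \|U_0 - U^\pi\|_\infty$. Thus obtaining an $L^\infty$ error of at most $\varepsilon$ requires $k = O(\log(\varepsilon)/\log(\gamma))$ iterations, matching the corresponding bound used for the bisimulation operator in \autoref{sec:bis}.

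Next I would bound the cost of a single application of $T^\pi_M$. The policy-averaged rewards $r^\pi_x$ and transitions $P^\pi_x$ can be precomputed once in $O(|\mathcal{X}||\mathcal{A}|)$ and $O(|\mathcal{X}|^2|\mathcal{A}|)$ time respectively, which is dominated by the iteration cost below. For each pair $(x,y) \in \mathcal{X}^2$, evaluating
\begin{equation*}
(T^\pi_M U)(x,y) = |r^\pi_x - r^\pi_y| + \gamma \sum_{x',y' \in \mathcal{X}} P^\pi_x(x')\, P^\pi_y(y')\, U(x',y')
\end{equation*}
is a bilinear form in the vectors $P^\pi_x, P^\pi_y$ against the matrix $U$, and so can be computed in $O(|\mathcal{X}|^2)$ operations. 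Since there are $|\mathcal{X}|^2$ such pairs, a full application of $T^\pi_M$ runs in $O(|\mathcal{X}|^4)$ time. Multiplying by the iteration count gives the claimed bound $O(|\mathcal{X}|^4 \log(\varepsilon)/\log(\gamma))$.

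There is no real obstacle here; the main point worth emphasizing is the comparison with the bisimulation operator, whose per-iteration cost includes $|\mathcal{X}|^2|\mathcal{A}|$ Kantorovich computations of cost $\tilde{O}(|\mathcal{X}|^3)$ each. Replacing the optimal coupling by the independent coupling collapses each Wasserstein evaluation into a simple bilinear form, which is exactly what drives the improvement from the $\tilde{O}(|\mathcal{X}|^5|\mathcal{A}|)$ per-iteration cost of $\wassersteinoperator$ down to $O(|\mathcal{X}|^4)$, and is the core qualitative takeaway of the proposition.
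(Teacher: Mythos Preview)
Your proposal is correct and follows essentially the same argument as the paper: use the $\gamma$-contraction of $T^\pi_M$ to bound the number of iterations by $O(\log(\varepsilon)/\log(\gamma))$, and observe that each iteration updates $|\mathcal{X}|^2$ entries at cost $O(|\mathcal{X}|^2)$ apiece. Your additional remarks on precomputing $r^\pi_x$, $P^\pi_x$ and on the bilinear-form interpretation are sound elaborations, but the core reasoning matches the paper's proof.
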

\begin{prf}
	Since, by \autoref{prop:mico-contraction}, the operator $T^\pi_M$ is a $\gamma$-contraction under $L^\infty$, we require $\mathcal{O}(\log(\varepsilon) / \log(\gamma))$ applications of the operator to obtain an $\varepsilon$-approximation in $L^\infty$. Each iteration of value iteration updates $|\mathcal{X}|^2$ table entries, and the cost of each update is $\mathcal{O}(|\mathcal{X}|^2)$, leading to an overall cost of $O(|\mathcal{X}|^4\log(\varepsilon) / \log(\gamma))$.
\end{prf}

In contrast to the bisimulation metric, this represents a computational saving
of $O(|\mathcal{X}|)$, which arises from the lack of a need to
solve optimal transport problems over the state space in computing the MICo
distance. There is a further saving of $\mathcal{O}(|\mathcal{A}|)$ that arises
since MICo focuses on an individual policy $\pi$, and so does not require the
max over actions in the bisimulation operator definition.

\textbf{(ii) Online approximation. }
Due to the interpretation of the MICo operator $T^\pi_M$ as the Bellman evaluation operator in an auxiliary MDP, established in \autoref{lem:aux-mdp}, algorithms and associated proofs of correctness for computing the MICo distance online can be straightforwardly derived from standard online algorithms for policy evaluation. We describe a straightforward approach, based on the TD(0) algorithm, and also note that the wide range of online policy evaluation methods incorporating off-policy corrections and multi-step returns, as well as techniques for applying such methods at scale, may also be used.

Given a current estimate $U_t$ of the fixed point of $T^\pi_M$ and a pair of observations $(x, a, r, x^\prime)$, $(y, b, \tilde{r}, y^\prime)$ generated under $\pi$, we can define a new estimate $U_{t+1}$ via
\begin{align}\label{eq:stoc-update}
  U_{t+1}(x, y) \leftarrow (1-\epsilon_t(x, y))U_t(x, y) + \epsilon_t(x, y) ( |r - \tilde{r}| + \gamma U_{t}(x', y') )\,
\end{align}
and $U_{t+1}(\tilde{x}, \tilde{y}) = U_t(\tilde{x}, \tilde{y})$ for all other state-pairs $(\tilde{x}, \tilde{y}) \not=(x, y)$,
for some sequence of stepsizes $\{\epsilon_t(x, y) \mid t \geq 0, (x, y) \in \mathcal{X}^2 \}$.
Sufficient conditions for convergence of this algorithm can be deduced straightforwardly from corresponding conditions for TD(0). We state one such result below. An important caveat is that the correctness of this particular algorithm depends on rewards depending only on state; one can switch to state-action metrics if this hypothesis is not satisfied.

\begin{restatable}{proposition}{propSA}\label{prop:SA}
	Suppose rewards depend only on state, and consider the sequence of estimates $(U_t)_{t \geq 0}$, with $U_0$ initialised arbitrarily, and $U_{t+1}$ updated from $U_t$ via a pair of transitions $(x_t, a_t, r_t, x'_t)$, $(y_t, b_t, \tilde{r}_t, y'_t)$ as in \autoref{eq:stoc-update}. 
	If all state-pairs tuples are updated infinitely often, and stepsizes for these updates satisfy the Robbins-Monro conditions. Then $U_t \rightarrow U^\pi$ almost surely.
\end{restatable}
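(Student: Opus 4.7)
The plan is to recognize the update in \autoref{eq:stoc-update} as an asynchronous TD(0)-style stochastic approximation scheme applied to the lifted MDP of \autoref{lem:aux-mdp}, and then invoke a classical convergence theorem for stochastic approximation driven by a sup-norm contraction (for instance, the Tsitsiklis / Jaakkola--Jordan--Singh framework used to prove convergence of TD(0) and tabular $Q$-learning). By \autoref{lem:aux-mdp}, $U^\pi$ is the value function of the lifted MDP under the independent product policy $\tilde{\pi}$, and by \autoref{prop:mico-contraction}, the associated Bellman evaluation operator $T^\pi_M$ is a $\gamma$-contraction on $\mathbb{R}^{\cX\times\cX}$ in $\|\cdot\|_\infty$ with unique fixed point $U^\pi$.

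First I would verify that the random increment $|r_t - \tilde{r}_t| + \gamma U_t(x'_t, y'_t)$ is an unbiased single-sample realization of $(T^\pi_M U_t)(x_t, y_t)$. Conditioned on $(x_t, y_t)$ and the past, the sampled pair $(x'_t, y'_t)$ is drawn from $P^\pi_{x_t} \otimes P^\pi_{y_t}$, which matches the product transition kernel of the lifted MDP under $\tilde{\pi}$. Under the state-only reward hypothesis, $|r_t - \tilde{r}_t| = |r^\pi_{x_t} - r^\pi_{y_t}| = \widetilde{R}_{(x_t, y_t)}$ deterministically, so there is no reward noise and the only stochasticity is the next-state pair. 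The residual noise $w_t := \gamma U_t(x'_t,y'_t) - \gamma\,\mathbb{E}_{\tilde{\pi}}[U_t(x'_t,y'_t) \mid x_t, y_t]$ is then a martingale difference adapted to the natural filtration, and bounded rewards together with the contraction property give the quadratic growth bound $\mathbb{E}[w_t^2 \mid \mathcal{F}_t] \le C(1 + \|U_t\|_\infty^2)$ by a standard induction on $\|U_t\|_\infty$.

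With the Robbins--Monro conditions on the stepsizes $\epsilon_t(x,y)$ and the assumption that every state-pair is updated infinitely often, the hypotheses of the asynchronous stochastic approximation theorem for contractive operators are met coordinate by coordinate, yielding $U_t \to U^\pi$ almost surely. The most delicate step, and the obstacle worth flagging to the reader, is not any deep calculation but rather justifying the unbiasedness of the increment: this is exactly where the state-only reward assumption is essential. If rewards depended on actions, $\mathbb{E}[|r_t - \tilde{r}_t|]$ would not in general equal $|r^\pi_{x_t} - r^\pi_{y_t}|$ (by Jensen's inequality applied to the absolute value), so the sampled target would be a biased estimator of $T^\pi_M U_t$ and the SA theorem would not apply; as the proposition notes, one would then need to move to a state-action version of the metric, for which the lifted-MDP argument reproduces cleanly.
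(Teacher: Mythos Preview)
Your proposal is correct and follows the same approach as the paper: recognise the update as TD(0) in the lifted MDP of \autoref{lem:aux-mdp} and invoke a standard asynchronous stochastic-approximation convergence result for sup-norm contractions (the paper cites Proposition~4.5 of \citet{bertsekas1996neuro}, which is in the same family as the Tsitsiklis and Jaakkola--Jordan--Singh results you name). Your write-up is in fact more detailed than the paper's one-line proof, spelling out the unbiasedness check, the martingale-difference structure of the noise, and the role of the state-only reward hypothesis---all of which are implicit in the paper's appeal to the TD(0) theorem.
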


\begin{prf}
	Under the assumptions of the proposition, the update described is exactly a TD(0) update in the lifted MDP described in \autoref{lem:aux-mdp}. We can therefore appeal to Proposition~4.5 of \citet{bertsekas1996neuro} to obtain the result.
\end{prf}

Thus, in contrast to the Kantorovich metric, convergence to the exact MICo metric is possible with an online algorithm that uses sampled transitions.

\textbf{(iii) Relationship to underlying policy.}
In contrast to the bisimulation metric, we have the following on-policy guarantee for the MICo metric.

\begin{proposition}
	For any $\pi \in \mathscr{P}(\mathcal{A})^{\mathcal{X}}$ and states $x,y \in \mathcal{X}$, we have $|V^\pi(x) - V^\pi(y)| \leq U^\pi(x, y)$.
  \label{prop:valueFunctionBound}
\end{proposition}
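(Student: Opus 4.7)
The plan is to compare $U^\pi$ against the function $F : \mathcal{X} \times \mathcal{X} \to \mathbb{R}$ defined by $F(x,y) = |V^\pi(x) - V^\pi(y)|$, and to show that $F$ lies below the fixed point of $T^\pi_M$. The route is the standard one for bounds derived from a contractive, monotone operator: verify that $F$ is a subsolution of the fixed-point equation (i.e.\ $F \le T^\pi_M F$ pointwise), then iterate to push $F$ below $U^\pi$.

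First I would expand $F$ using the Bellman equation for $V^\pi$. Writing $V^\pi(x) = r^\pi_x + \gamma \mathbb{E}_{x' \sim P^\pi_x}[V^\pi(x')]$ and subtracting the analogous expression at $y$, the triangle inequality gives
\begin{align*}
F(x,y) \;\le\; |r^\pi_x - r^\pi_y| + \gamma\bigl|\mathbb{E}_{x' \sim P^\pi_x}[V^\pi(x')] - \mathbb{E}_{y' \sim P^\pi_y}[V^\pi(y')]\bigr|.
\end{align*}
Because $x'$ and $y'$ are independent under the product measure $P^\pi_x \otimes P^\pi_y$, the difference of expectations equals $\mathbb{E}_{x',y'}[V^\pi(x') - V^\pi(y')]$, and Jensen's inequality (i.e.\ $|\mathbb{E}[\cdot]| \le \mathbb{E}[|\cdot|]$) converts this to $\mathbb{E}_{x',y'}[|V^\pi(x') - V^\pi(y')|] = \mathbb{E}_{x',y'}[F(x',y')]$. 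Combining, $F(x,y) \le |r^\pi_x - r^\pi_y| + \gamma \mathbb{E}_{x',y'}[F(x',y')] = (T^\pi_M F)(x,y)$.

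Next I would note that $T^\pi_M$ is monotone in its argument: if $U \le U'$ pointwise, then clearly $T^\pi_M U \le T^\pi_M U'$, since $T^\pi_M$ only adds a fixed term and takes a nonnegative expectation. Iterating the subsolution inequality yields $F \le T^\pi_M F \le (T^\pi_M)^2 F \le \cdots \le (T^\pi_M)^n F$ for every $n$. Finally, by \autoref{prop:mico-contraction} and \autoref{corr:mico-fp}, $(T^\pi_M)^n F \to U^\pi$ in $L^\infty$, so passing to the limit gives $F \le U^\pi$, which is exactly the claimed bound.

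I do not anticipate a genuine obstacle; the only subtle point is justifying the swap from $|\mathbb{E}_{x'}[\cdot] - \mathbb{E}_{y'}[\cdot]|$ to $\mathbb{E}_{x',y'}[|\cdot|]$, which is precisely where the independent-coupling structure of the MICo operator is used, mirroring the way the proof of \autoref{lem:aux-mdp} treats $(x',y')$ as a single transition in the lifted MDP.
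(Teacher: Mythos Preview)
Your proof is correct and rests on the same core computation as the paper's: expand $V^\pi(x)-V^\pi(y)$ via the Bellman equation, apply the triangle inequality, and use the independent coupling to pass from a difference of expectations to an expectation of $F$. The only difference is the direction of iteration: the paper argues coinductively, showing that the property $|V^\pi(x)-V^\pi(y)|\le U(x,y)$ is preserved by $T^\pi_M$ and starts from the large constant $U\equiv 2\max_{z,a}|r(z,a)|/(1-\gamma)$, iterating \emph{down} to $U^\pi$, whereas you show $F\le T^\pi_M F$ directly and iterate \emph{up} from $F$; both reach $U^\pi$ by the same contraction result, so the distinction is purely stylistic.
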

\vspace{-1.5em}
\begin{prf}
	We apply a coinductive argument \citep{kozen07coinductive} to show that if $|V^\pi(x) - V^\pi(y)| \leq U(x, y) \ \text{for all } x, y \in \mathcal{X}$, for some $U \in \mathbb{R}^{\mathcal{X}\times\mathcal{X}}$ symmetric in its two arguments, then we also have
$|V^\pi(x) - V^\pi(y)| \leq (T^\pi_M U)(x, y) \ \text{for all } x, y \in \mathcal{X}$.
	Since the hypothesis holds for the constant function $U(x,y) = 2 \max_{z, a} |r(z, a)|/(1-\gamma)$, and $T^\pi_M$ contracts around $U^\pi$, the conclusion then follows. Therefore, suppose the hypothesis holds. Then we have
	\begin{align*}
    V^\pi(x) - V^\pi(y) & = r^\pi_x - r^\pi_y + \gamma \sum_{x' \in \mathcal{X}} P^\pi_x(x') V(x') - \gamma \sum_{y' \in \mathcal{X}} P^\pi_y(y') V(y') \\
    & \leq |r^\pi_x - r^\pi_y| + \gamma \sum_{x', y' \in \mathcal{X}} P^\pi_x(x')P^\pi_y(y') (V^\pi(x') - V^\pi(y')) \\
    & \leq |r^\pi_x - r^\pi_y| + \gamma \sum_{x', y' \in \mathcal{X}} P^\pi_x(x')P^\pi_y(y') U(x', y') = (T^\pi_M U)(x, y) \, .
	\end{align*}
	By symmetry, $V^\pi(y) - V^\pi(x) \leq (T^\pi_M U)(x, y)$, as required.
\end{prf}

\subsection{Diffuse metrics}
To characterize the nature of the fixed point $U^\pi$, we introduce the notion of a \emph{diffuse metric}. 
\begin{definition}
    Given a set $\mathcal{X}$, a function $d:\mathcal{X}\times 
    \mathcal{X} \to \mathbb{R}$ is a \emph{diffuse metric} if the following axioms hold:
        (i) $d(x,y)\geq 0$ for any $x,y\in \mathcal{X}$; 
        (ii) $d(x,y)=d(y,x)$ for any $x,y\in \mathcal{X}$;
        (iii) $d(x,y)\leq d(x,z)+d(y,z)$ $\forall x,y,z\in \mathcal{X}$.
\end{definition}
These differ from the standard metric axioms in the first point: we no longer
require that a point has zero self-distance, and two distinct points
may have zero distance.  Notions of this kind are increasingly common in machine
learning as researchers develop more computationally tractable versions of
distances, as with entropy-regularised optimal transport distances
\citep{cuturi2013sinkhorn}, which also do not satisfy the axiom of zero
self-distance.

An example of a diffuse metric is the Łukaszyk–Karmowski distance
\citep{LKmetric}, which is used in the MICo metric as the operator between the
next-state distributions.  Given a diffuse metric space $(\mathcal{X}, \rho)$, the Łukaszyk–Karmowski distance $d^{\rho}_{\text{LK}}$ is a diffuse metric on probability measures on $\mathcal{X}$ given by $d^\rho_{\text{LK}}(\mu,\nu)=\mathbb{E}_{x\sim \mu, y\sim \nu}[\rho(x,y)]$.
This example demonstrates the origin of the name \emph{diffuse} metrics: the
non-zero self distances arises from a point being spread across a probability
distribution.
In terms of the Łukaszyk–Karmowski distance, the MICo distance can be written as the fixed point $U^\pi(x,y)=|r^\pi_x-r^\pi_y|+d_{\text{LK}}(U^\pi) (P^\pi_x,P^\pi_y)$.
This characterisation leads to the following result.

\begin{restatable}{proposition}{micoDiffuse}\label{prop:micoDiffuse}
The MICo distance is a diffuse metric. 
\end{restatable}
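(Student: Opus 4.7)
The plan is to verify the three diffuse metric axioms for $U^\pi$ by showing each one is preserved under the operator $T^\pi_M$, then invoking \autoref{corr:mico-fp} to transfer the property to the fixed point. Specifically, I will start from the trivially satisfying initialisation $U_0 \equiv 0$ and show that if $U$ satisfies axiom (i), (ii), or (iii), then so does $T^\pi_M U$. Since each axiom defines a closed subset of $\mathbb{R}^{\mathcal{X}\times\mathcal{X}}$ (under $L^\infty$), the iterates $(T^\pi_M)^n U_0$ all lie in that subset, and therefore so does the limit $U^\pi$.

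For non-negativity, I would simply note that $|r^\pi_x - r^\pi_y| \geq 0$ and that if $U \geq 0$ pointwise, the expectation in \autoref{eqn:uUpdate} is also non-negative. For symmetry, I would observe that $|r^\pi_x - r^\pi_y| = |r^\pi_y - r^\pi_x|$ and that swapping the roles of $x$ and $y$ in the expectation term merely relabels the dummy variables $x', y'$; if $U(x', y') = U(y', x')$, the expectations agree, so $(T^\pi_M U)(x,y) = (T^\pi_M U)(y,x)$.

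The triangle inequality is the main obstacle, and it is where the independent coupling pays off. Fixing $x, y, z \in \mathcal{X}$ and assuming inductively that $U(x', y') \leq U(x', z') + U(z', y')$ for all triples, I introduce an auxiliary random variable $z' \sim P^\pi_z$ that is independent of $x' \sim P^\pi_x$ and $y' \sim P^\pi_y$. Then, using the ordinary triangle inequality for real numbers on the reward term, and marginalising $z'$ in and out of the expectation:
\begin{align*}
(T^\pi_M U)(x,y) &= |r^\pi_x - r^\pi_y| + \gamma\, \mathbb{E}_{x', y', z'}\!\left[U(x', y')\right] \\
&\leq |r^\pi_x - r^\pi_z| + |r^\pi_z - r^\pi_y| + \gamma\, \mathbb{E}_{x', y', z'}\!\left[U(x', z') + U(z', y')\right] \\
&= \bigl(|r^\pi_x - r^\pi_z| + \gamma\, \mathbb{E}_{x', z'}[U(x', z')]\bigr) + \bigl(|r^\pi_z - r^\pi_y| + \gamma\, \mathbb{E}_{z', y'}[U(z', y')]\bigr) \\
&= (T^\pi_M U)(x,z) + (T^\pi_M U)(z,y),
\end{align*}
where the second equality uses the product structure of the joint law of $(x', y', z')$ to split the expectation. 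This is precisely where independent couplings are crucial: under a general coupling one cannot freely insert an independent auxiliary variable.

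Combining the three preservation results with \autoref{corr:mico-fp}, since $U_0 \equiv 0$ satisfies all three axioms and the set of functions satisfying them is closed under pointwise (hence $L^\infty$) limits, the fixed point $U^\pi$ is a diffuse metric. I do not expect subtleties beyond keeping track of the independent coupling in the triangle inequality step.
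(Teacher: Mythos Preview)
Your proposal is correct and matches the paper's proof essentially line for line: the paper also starts from $U_0 \equiv 0$, shows by induction that each iterate $U_k = (T^\pi_M)^k U_0$ satisfies the triangle inequality via exactly the same insertion of an independent $Z' \sim P^\pi_z$, and then passes to the limit using \autoref{corr:mico-fp}. The paper dismisses non-negativity and symmetry as ``clear'' without writing out the preservation argument you give, but your added detail is sound and fully consistent with their approach.
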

\begin{prf}
	Non-negativity and symmetry of $U^\pi$ are clear, so it remains to check the triangle inequality. To do this, we define a sequence of iterates $(U_k)_{k \geq 0}$ in $ \mathbb{R}^{\mathcal{X}\times\mathcal{X}}$ by $U_0(x, y) = 0$ for all $x, y \in \mathcal{X}$, and $ U_{k+1} = T^\pi_M U_k$ for each $k \geq 0$. Recall that by \autoref{corr:mico-fp} that $U_k \rightarrow U^\pi$. We will show that each $U_k$ satisfies the triangle inequality by induction. By taking limits on either side of the inequality, we will then recover that $U^\pi$ itself satisfies the triangle inequality.
    The base case of the inductive argument is clear from the choice of $U_0$. For the inductive step, assume that for some $k \geq 0$, $U_k(x,y) \leq U_k(x, z) + U_k(z, y)$ for all $x, y, z \in \mathcal{X}$. Now for any $x, y, z \in \mathcal{X}$, we have
	\begin{align*}
		U_{k+1}(x, y) & = |r^\pi_x - r^\pi_y| + \gamma \mathbb{E}_{X' \sim P^\pi_x, Y' \sim P^\pi_y}[U_k(X', Y')] \\
		& \leq |r^\pi_x - r^\pi_z| + |r^\pi_z - r^\pi_y| + \gamma \mathbb{E}_{X' \sim P^\pi_x, Y' \sim P^\pi_y, Z' \sim P^\pi_z}[U_k(X', Z') + U_k(Z', Y')] \\
		& = U_{k+1}(x, z) + U_{k+1}(z, y) \, . \qedhere
	\end{align*}
\end{prf}

It is interesting to note that a state $x \in \mathcal{X}$ has
zero self-distance iff the Markov chain induced by $\pi$ initialised at $x$ is deterministic, and the
magnitude of a state's self-distance is indicative of the amount of
``dispersion'' 
in the distribution. Hence, in general, we have $U^\pi(x, x) > 0$, and $U^\pi(x, x) \not= U^\pi(y, y)$ for distinct states $x, y \in \mathcal{X}$. See the appendix for further discussion of diffuse metrics and related constructions.

\section{The MICo loss}
\label{sec:loss}
The impetus of our work is the development of principled mechanisms for directly shaping the representations used by RL agents so as to improve their learning. In this section we present a novel loss based on the MICo update operator $T^\pi_M$ given in \autoref{eqn:uUpdate} that can be incorporated into any RL agent. Given the fact that MICo is a diffuse metric that can admit non-zero self-distances, special care needs to be taken in how these distances are learnt; indeed, traditional mechanisms for measuring distances between representations (e.g. Euclidean and cosine distances) are geometrically-based and enforce zero self-distances.

\newcommand{\nnparam}{\xi}

We assume an RL agent learning an estimate $Q_{\nnparam,\omega}$ defined by the composition of two function approximators $\psi$ and $\phi$ with parameters $\xi$ and $\omega$, respectively: $Q_{\nnparam, \omega}(x, \cdot) = \psi_{\nnparam}(\phi_{\omega}(x))$ (note that this can be the critic in an actor-critic algorithm such as SAC). We will refer to $\phi_{\omega}(x)$ as the {\em representation} of state $x$ and aim to make distances between representations match the MICo distance; we refer to $\psi_{\nnparam}$ as the {\em value approximator}.
We define the parameterized representation distance, $U_{\omega}$, as an approximant to $U^{\pi}$:
\[ U^{\pi}(x, y) \approx U_{\omega}(x, y)  := \frac{\| \phi_{\omega}(x) \|^2_2 + \| \phi_{\omega}(y) \|^2_2 }{2} + \beta \theta(\phi_{\omega}(x), \phi_{\omega}(y)) \]
where $\theta(\phi_\omega(x), \phi_\omega(y))$ is the angle between vectors $\phi_\omega(x)$ and $\phi_\omega(y)$ and $\beta$ is a scalar (in our results we use $\beta=0.1$ but present results with other values of $\beta$ in the appendix).

\begin{figure}[!t]
	\centering
	\includegraphics[keepaspectratio,width=.4\textwidth]{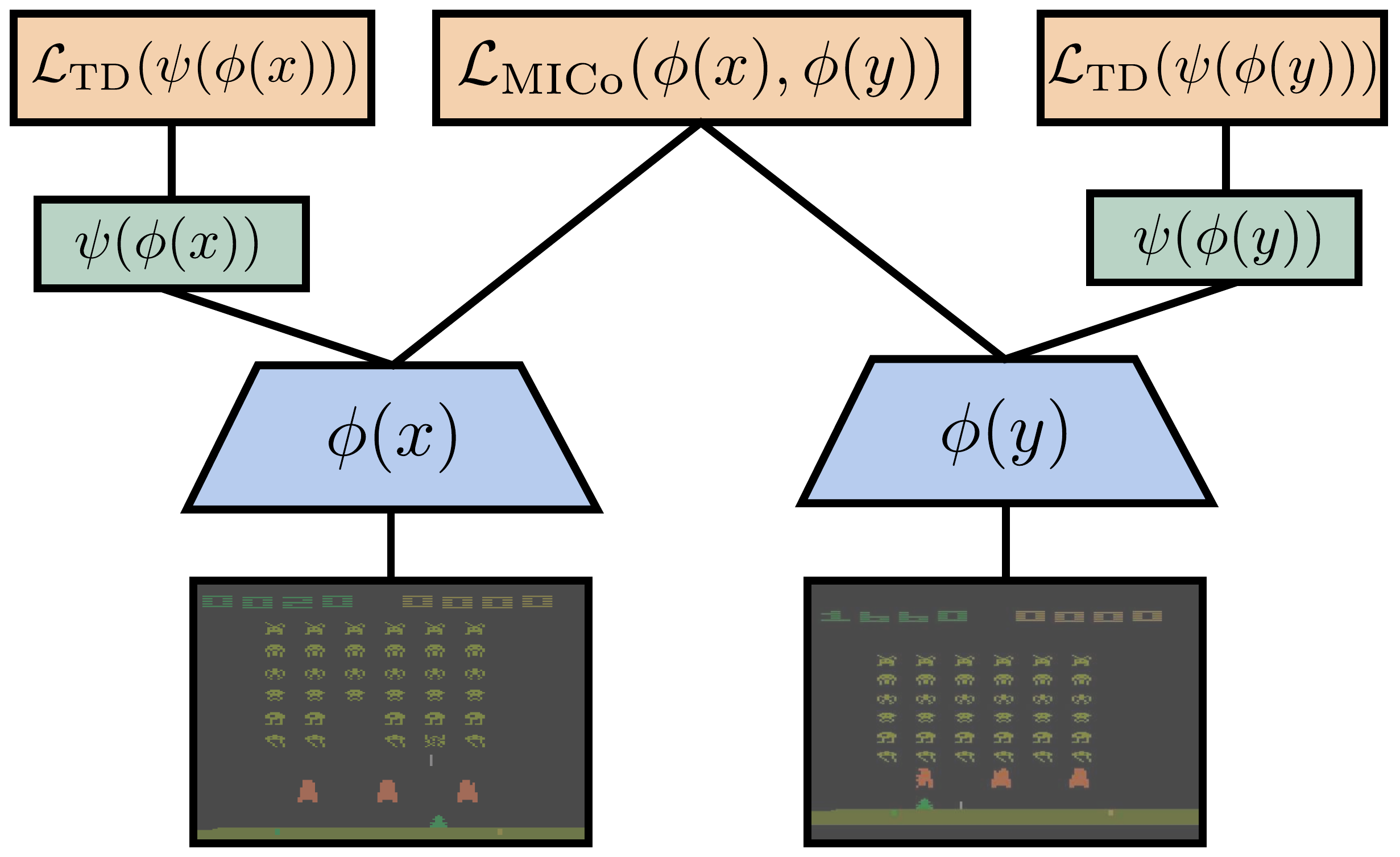}
	\includegraphics[keepaspectratio,width=.4\textwidth]{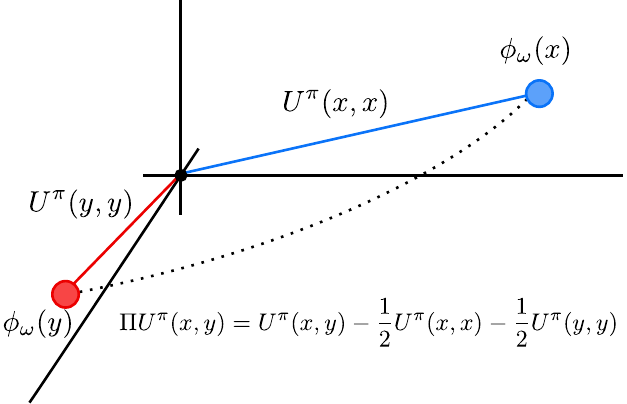}
	\caption{{\bf Left:} Illustration of network architecture for learning MICo; {\bf Right:} The projection of MICo distances onto representation space.}
	\label{fig:network}
  \vspace{-0.5cm}
\end{figure}
Based on \autoref{eqn:uUpdate}, our learning target is then $T^U_{\bar{\omega}}(r_x, x', r_y, y') = |r_x - r_y| + \gamma U_{\bar{\omega}}(x', y')$,
where $\bar{\omega}$ is a separate copy of the network parameters that are synchronised with $\omega$ at infrequent intervals. This is a common practice that was introduced by \citet{mnih15human} (and in fact, we use the same update schedule they propose). The loss for this learning target is
\begin{align*}
  \mathcal{L}_{\text{MICo}}(\omega) = & \mathbb{E}_{\langle x, r_x, x'\rangle ,  \langle y, r_y, y'\rangle}\left[ \left(T^U_{\bar{\omega}}(r_x, x', r_y, y') - U_{\omega}(x, y)\right)^2 \right]
\end{align*}
where $\langle x, r_x, x'\rangle$ and $\langle y, r_y, y'\rangle$ are pairs of transitions sampled from the agent's replay buffer.
We can combine $\mathcal{L}_{\text{MICo}}$ with the temporal-difference loss $\mathcal{L}_{\text{TD}}$ of any RL agent as $(1-\alpha)\mathcal{L}_{\text{TD}} + \alpha\mathcal{L}_{\text{MICo}}$, where $\alpha \in (0,1)$. Each sampled mini-batch is used for both MICo and TD losses. \autoref{fig:network} (left) illustrates the network architecture used for learning.

Although the loss $\mathcal{L}_{\text{MICo}}$ is designed to learn the MICo diffuse metric $U^\pi$, the values of the metric itself are parametrised through $U_\omega$ defined above, which is constituted by several distinct terms. This appears to leave a question as to how the representations $\phi_\omega(x)$ and $\phi_\omega(y)$, as Euclidean vectors, are related to one another when the MICo loss is minimised. Careful inspection of the form of $U_\omega(x, y)$ shows that the (scaled) angular distance between $\phi_\omega(x)$ and $\phi_\omega(y)$ can be recovered from $U_\omega$ by subtracting the learnt approximations to the self-distances $U^\pi(x, x)$ and $U^\pi(y,y)$ (see \autoref{fig:network}, right). We therefore define the reduced MICo distance $\Pi U^\pi$, which encodes the distances enforced between the representation vectors $\phi_\omega(x)$ and $\phi_\omega(y)$, by:

\vspace{-1.5em}
\begin{align*}
	\beta \theta(\phi_{\omega}(x) ,\phi_{\omega}(y))  \approx \Pi U^{\pi}(x, y) = U^{\pi}(x, y) - \frac{1}{2}U^{\pi}(x, x) - \frac{1}{2}U^{\pi}(y, y)  \, .
\end{align*}

In the following section we investigate the following two questions: {\bf (1)} How informative of $V^{\pi}$ is $\Pi U^{\pi}$?; and {\bf (2)} How useful are the features encountered by $\Pi U^{\pi}$ for policy evaluation? We conduct these investigations on tabular environments where we can compute the metrics exactly, which helps clarify the behaviour of our loss when combined with deep networks in \autoref{sec:empirical}.

\subsection{Value bound gaps}

Although \autoref{prop:valueFunctionBound} states that we have $|V^{\pi}(x) - V^{\pi}(y)| \leq U^{\pi}(x, y)$, we do not, in general, have the same upper bound for $\Pi U^{\pi}(x, y)$ as demonstrated by the following result.

\begin{lemma}
There exists an MDP with $x,y\in\cX$, and $\pi\in\Pi$ where $|V^{\pi}(x) - V^{\pi}(y)| > \Pi U^{\pi}(x, y)$.
\end{lemma}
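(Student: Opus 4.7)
The plan is to exhibit an explicit small MDP and compute every quantity on both sides of the inequality, showing a strict gap. Intuitively, $\Pi U^\pi(x,y)$ subtracts off the self-distances $U^\pi(x,x)$ and $U^\pi(y,y)$ from $U^\pi(x,y)$, and these self-distances reflect internal stochastic dispersion of the Markov chain induced by $\pi$. If one of the two states has a deterministic successor (zero self-distance) while the other has a stochastic successor whose \emph{expected} value happens to agree with its deterministic counterpart, then the subtraction removes ``too much'' and pushes $\Pi U^\pi$ strictly below the value gap.

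Concretely, I would take a four-state MDP with states $\{x, y, z_1, z_2\}$, a single action (so $\pi$ is trivial), rewards $r(x)=r(y)=0$, $r(z_1)=r$, $r(z_2)=-r$ for some $r>0$, and transitions $x \to z_1$ deterministically, $y \to z_1$ and $y \to z_2$ each with probability $1/2$, and $z_1, z_2$ each self-looping. The first step is to evaluate values: $V^\pi(z_1) = r/(1-\gamma)$, $V^\pi(z_2) = -r/(1-\gamma)$, $V^\pi(x) = \gamma r/(1-\gamma)$, and $V^\pi(y) = 0$, giving $|V^\pi(x) - V^\pi(y)| = \gamma r/(1-\gamma)$.

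The second step is to solve the MICo fixed-point equations on the relevant pairs. Each $z_i$ is deterministic, so $U^\pi(z_i, z_i) = 0$, and a routine fixed-point computation yields $U^\pi(z_1, z_2) = 2r/(1-\gamma)$. Since $x$ is deterministic, $U^\pi(x, x) = 0$. The independent coupling of $P^\pi_y$ with itself assigns probability $1/2$ to the combined event $\{(z_1,z_2),(z_2,z_1)\}$, so that $U^\pi(y, y) = \gamma r/(1-\gamma)$. The cross term $U^\pi(x, y)$ arises from the coupling putting $(z_1, z_1)$ and $(z_1, z_2)$ each with probability $1/2$, yielding $U^\pi(x, y) = \gamma r/(1-\gamma)$. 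Therefore
\[
  \Pi U^\pi(x, y) \;=\; \frac{\gamma r}{1-\gamma} - \frac{1}{2}\cdot 0 - \frac{1}{2}\cdot\frac{\gamma r}{1-\gamma} \;=\; \frac{\gamma r}{2(1-\gamma)},
\]
which is strictly less than $|V^\pi(x) - V^\pi(y)| = \gamma r/(1-\gamma)$ for any $\gamma, r > 0$.

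The main obstacle is not any individual computation but rather discovering the right asymmetric configuration. Several natural attempts (e.g.\ two symmetric stochastic states with shifted rewards) yield exact equality $\Pi U^\pi(x,y) = |V^\pi(x) - V^\pi(y)|$, because the self-distances at $x$ and $y$ cancel perfectly against the extra dispersion showing up inside $U^\pi(x,y)$. The key insight is to pair a deterministic state (whose self-distance vanishes) with a stochastic state whose successor-reward spread inflates $U^\pi(y,y)$ without correspondingly inflating $U^\pi(x,y)$. Once that geometry is identified, the rest is a handful of elementary fixed-point equations on a four-state chain.
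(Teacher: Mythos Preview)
Your proof is correct and follows the same approach as the paper: exhibit an explicit single-action MDP in which one state has a deterministic successor (hence zero MICo self-distance) and the other has a stochastic successor, then compute everything directly. The paper's counterexample is more economical---just two states $x,y$ with $y$ absorbing and $x$ transitioning uniformly to $\{x,y\}$ with reward $1$ at $x$---but your four-state construction has the minor advantage that the fixed-point equations decouple immediately, giving clean closed-form expressions in $\gamma$ and $r$ without needing to solve a self-referential system.
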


\begin{prf}
  Consider a single-action MDP with two states ($x$ and $y$) where $y$ is absorbing, $x$ transitions with equal probability to $x$ and $y$, and a reward of $1$ is received only upon taking an action from state $x$. There is only one policy for this MDP which yields the value function $V(x) \approx 1.8$ and $V(y) = 0$. The MICo distance gives $U(x, x) \approx 1.06$, $U(x, y) \approx 1.82$, and $U(y, y) = 0$, while the reduced MICo distance yields $\Pi U(x, x) = \Pi U(y, y) = 0$, and $\Pi U(x, y) \approx 1.29 < |V(x) - V(y)| = 1.8$.
\end{prf}

Despite this negative result, it is worth evaluating how often {\em in practice} this inequality is violated and by how much, as this directly impacts the utility of this distance for learning representations.

To do so, we make use of \emph{Garnet MDPs}, a class of randomly generated MDPs \citep{archibald95generation,piot14difference}. Given a specified number of states $n_{\cX}$ and the number of actions $n_{\cA}$, $\text{Garnet}(n_{\cX}, n_{\cA})$ is generated as follows: {\bf 1.} The branching factor $b_{x, a}$ of each transition $P_x^a$ is sampled uniformly from $[1:n_{\cX}]$. {\bf 2.} $b_{x, a}$ states are picked uniformly randomly from $\cX$ and assigned a random value in $[0, 1]$; these values are then normalized to produce a proper distribution $P_x^a$.\\{\bf 3.} Each $r_x^a$ is sampled uniformly in $[0, 1]$.

For each $\text{Garnet}(n_{\cX}, n_{\cA})$ we sample 100 stochastic policies $\{\pi_i\}$ and compute the average gap: $\frac{1}{100 |\cX|^2}\sum_i \sum_{x, y} d(x, y) - |V^{\pi_i}(x) - V^{\pi_i}(y)|$, where $d$ stands for any of the considered metrics. Note we are measuring the {\em signed} difference, as we are interested in the frequency with which the upper bound is violated. As seen in \autoref{fig:valueBoundGap} (left), our metric {\em does} on average provide an upper bound on the difference in values that is also tighter bound than those provided by $U^{\pi}$ and $\pi$-bisimulation. This suggests that the resulting representations remain informative of value similarities.

\begin{figure}[!t]
	\centering
	\includegraphics[keepaspectratio,width=.4\textwidth]{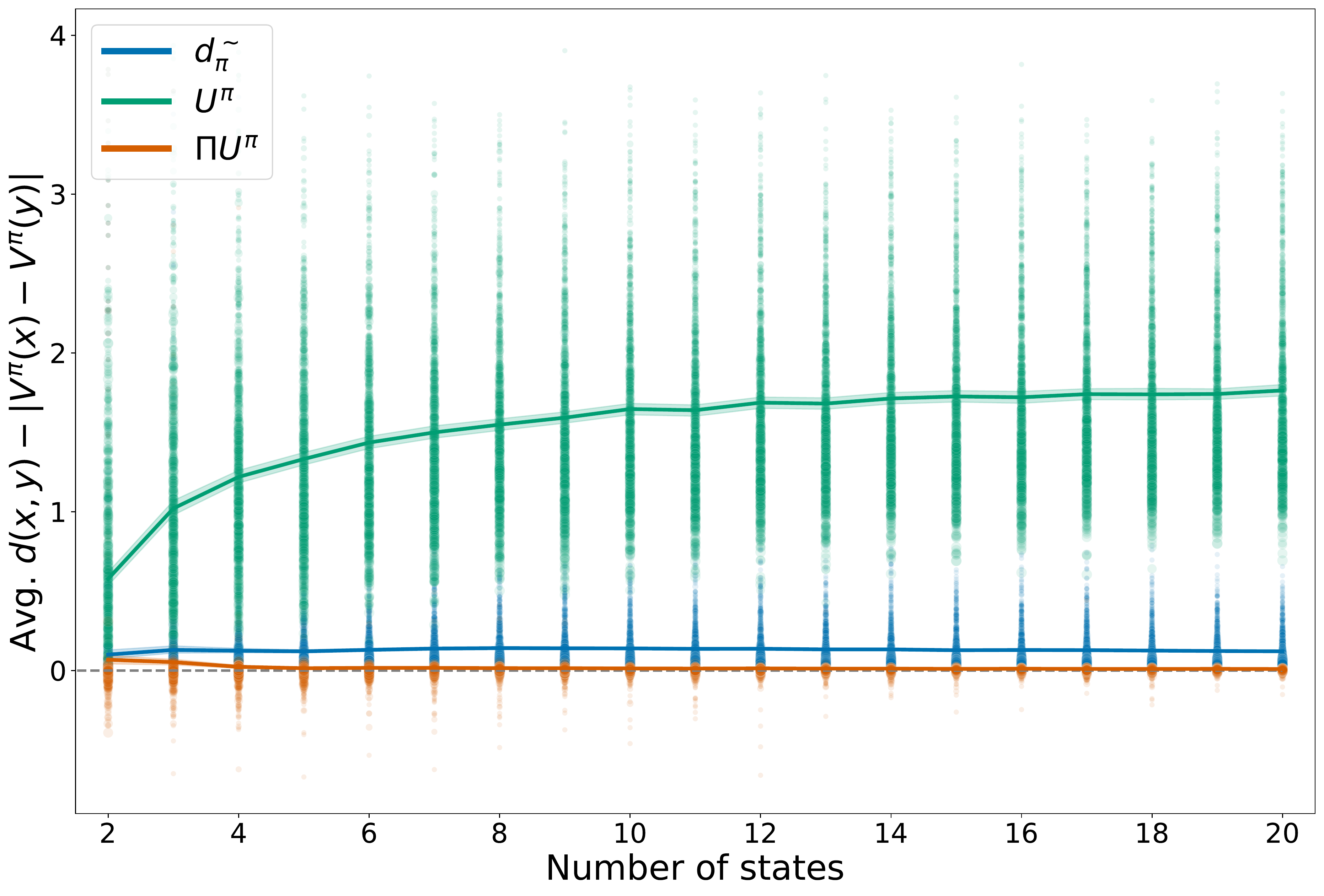}
	\includegraphics[keepaspectratio,width=.4\textwidth]{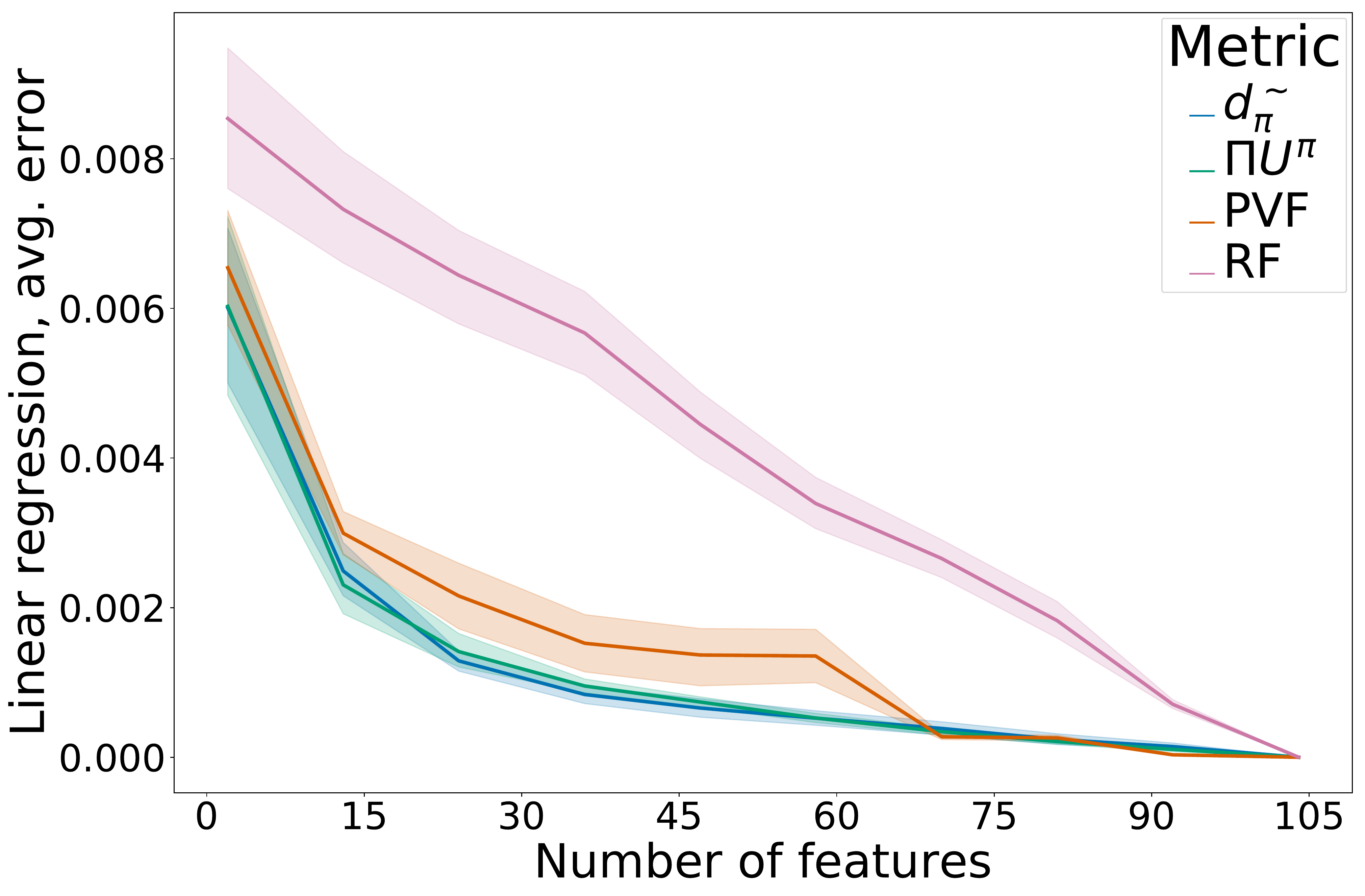}
	\caption{{\bf Left:} The gap between the difference in values and the various distances for Garnet MDPs with varying numbers of actions (represented by circle sizes); {\bf Right:} Average error when performing linear regression on varying numbers of features in the four-rooms GridWorld, averaged over 10 runs; shaded areas represent 95\% confidence intervals.}
	\label{fig:valueBoundGap}
	\label{fig:stateFeatures}
  \vspace{-0.5cm}
\end{figure}

\subsection{State features}

In order to investigate the usefuleness of the representations produced by $\Pi U^{\pi}$, we construct state features directly by using the computed distances to project the states into a lower-dimensional space with the UMAP dimensionality reduction algorithm \citep{mcinnes18umap}\footnote{Note that since UMAP expects a metric, it is ill-defined with the diffuse metric $U^{\pi}$.}. We then apply linear regression of the true value function $V^{\pi}$ against the features to compute $\hat{V^{\pi}}$ and measure the average error across the state space. As baselines we compare against random features (RF), Proto Value Functions (PVF) \citep{mahadevan07pvf}, and the features produced by $\pi$-bisimulation \citep{castro2020scalable}. We present our results on the well-known four-rooms GridWorld \citep{sutton99between} 
in \autoref{fig:valueBoundGap} (right) and provide results on more environments in the appendix. Despite the independent couplings, $\Pi U^{\pi}$ performs on par with $\pi$-bisimulation, which optimizes over all couplings.

\section{Large-scale empirical evaluation}
\label{sec:empirical}
Having developed a greater understanding of the properties inherent to the representations produced by the MICo loss, we evaluate it on the Arcade Learning Environment \citep{bellemare13arcade}.
We added the MICo loss to all the JAX agents provided in the Dopamine library \citep{castro18dopamine}:    DQN \citep{mnih15human}, Rainbow \citep{hessel18rainbow}, QR-DQN \citep{dabney18qrdqn}, and IQN \citep{dabney18iqn}, using mean squared error loss to minimize $\mathcal{L}_{\text{TD}}$ for DQN (as suggested by \citet{obando2020revisiting}). Given the state-of-the-art results demonstrated by the Munchausen-IQN (M-IQN) agent \citep{vieillard20munchausen}, we also evaluated incorporating our loss into M-IQN.\footnote{Given that the authors of M-IQN had implemented their agent in TensorFlow (whereas our agents are in JAX), we have reimplemented M-IQN in JAX and run 5 independent runs (in contrast to the 3 run by \citet{vieillard20munchausen}.} For all experiments we used the hyperparameter settings provided with Dopamine. We found that a value of $\alpha=0.5$ worked well with quantile-based agents (QR-DQN, IQN, and M-IQN), while a value of $\alpha=0.01$ worked well with DQN and Rainbow. We hypothesise that the difference in scale of the quantile, categorical, and non-distributional loss functions concerned leads to these distinct values of $\alpha$ performing well. 
We found it important to use the Huber loss \citep{huber64robust} to minimize $\mathcal{L}_{\text{MICo}}$ as this emphasizes greater accuracy for smaller distances as oppoosed to larger distances. We experimented using the MSE loss but found that larger distances tended to overwhelm the optimization process, thereby degrading performance.

We evaluated on all 60 Atari 2600 games over 5 seeds and report the results in \autoref{fig:humanNormalizedIQM} (left), using the interquantile metric (IQM), proposed by
\citet{agarwal2021deep} as a more robust and reliable alternative to mean and median (which are reported in \autoref{fig:humanNormalizedJoined}).
The fact that the MICo loss provides consistent improvements over a wide range of baseline agents of varying complexity suggests that the MICo loss can help learn better representations for control. 

Additionally, we evaluated the MICo loss on twelve of the DM-Control suite from pixels environments \citep{tassa18dmcontrol}. As a base agent we used Soft Actor-Critic (SAC) \citep{haarnoja18sac} with the convolutional auto-encoder described by \citet{yarats19improving}. We applied the MICo loss on the output of the auto-encoder (with $\alpha=1e-5$) and maintained all other parameters untouched. Recently, \citet{zhang2021invariant} introduced DBC, which learns a dynamics and reward model on the output of the auto-encoder; their bisimulation loss uses the learned dynamics model in the computation of the Kantorovich distance between the next state transitions. We consider two variants of their algorithm: one which learns a stochastic dynamics model (DBC), and one which learns a deterministic dynamics model (DBC-Det). We replaced their bisimulation loss with the MICo loss (which, importantly, does not require a dynamics model) and kept all other parameters untouched. As \autoref{fig:humanNormalizedIQM} illustrates, the best performance is achieved with SAC augmented with the MICo loss; additionally, replacing the bisimulation loss of DBC with the MICo loss is able to recover the performance of DBC to match that of SAC.

Additional details and results are provided in the appendix.

\section{Related Work}

Bisimulation metrics were introduced for MDPs by \citet{Ferns04}, and have been extended in a number of directions  \citep{ferns05metrics,ferns06methods,taylor2008lax,taylor2009bounding,ferns2011bisimulation,comanici2012fly,bacci2013computing,bacci2013fly,abate2013approximation,ferns2014bisimulation,castro2020scalable}, with applications including policy transfer \citep{castro10using,santara2019extra}, representation learning \citep{ruan2015representation,comanici2015basis}, and state aggregation \citep{li2006towards}.

A range of other notions of similarity in MDPs have also been considered, such as action sequence equivalence \citep{givan2003equivalence}, temporally extended metrics \citep{amortila2019temporally}, MDP homomorphisms \citep{ravindran03sdmp}, utile distinction \citep{mccallum1996reinforcement}, and policy irrelevance \citep{jong2005state}, as well as notions of policy similarity \citep{pacchiano2020learning,moskovitz2020efficient}. \citet{li2006towards} review different notions of similarity applied to state aggregation. Recently, \citet{lelan21metrics} performed an exhaustive analysis of the continuity properties, relative to functions of interest in RL, of a number of existing metrics in the literature.

The notion of zero self-distance, central to the diffuse metrics defined in this paper, is increasingly encountered in machine learning applications involving approximation of losses. Of particular note is entropy-regularised optimal transport \citep{cuturi2013sinkhorn} and related quantities \citep{genevay2018learning,fatras2019learning,chizat2020faster,fatras2021minibatch}.

More broadly, many approaches to representation learning in deep RL have been considered, such as those based on auxiliary tasks (see e.g.~ \citep{sutton2011horde,jaderberg2016reinforcement,bellemare2017distributional,franccois2019combined,gelada2019deepmdp,guo2020bootstrap}), and other approaches such as successor features \citep{dayan93improving, barreto2016successor}.

\section{Conclusion}

In this paper, we have introduced the MICo distance, a notion of state similarity that can be learnt at scale and from samples. We have studied the theoretical properties of MICo, and proposed a new loss to make the non-zero self-distances of this diffuse metric compatible with function approximation, combining it with a variety of deep RL agents to obtain strong performance on the Arcade Learning Environment. In contrast to auxiliary losses that \emph{implicitly} shape an agent's representation, MICo directly modifies the features learnt by a deep RL agent; our results indicate that this helps improve performance. To the best of our knowledge, this is the first time {\em directly} shaping the representation of RL agents has been successfully applied at scale. We believe this represents an interesting new approach to representation learning in RL; continuing to develop theory, algorithms and implementations for direct representation shaping in deep RL is an important and promising direction for future work.

{\bf Broader impact statement}

This work lies in the realm of ``foundational RL'' in that
it contributes to the fundamental understanding and
development of reinforcement learning algorithms and
theory. As such, despite us agreeing in the importance
of this discussion, our work is quite far removed from
ethical issues and potential societal consequences.

\section{Acknowledgements}
The authors would like to thank Gheorghe Comanici, Rishabh Agarwal, Nino Vieillard, and Matthieu Geist for their valuable feedback on the paper and experiments. Pablo Samuel Castro would like to thank Roman Novak and Jascha Sohl-Dickstein for their help in getting angular distances to work stably! Thanks to Hongyu Zang for pointing out that the x-axis labels for the SAC experiments needed to be fixed. Finally, the authors would like to thank the reviewers (both ICML'21 and NeurIPS'21) for helping make this paper better.

\clearpage

\bibliographystyle{plainnat}
\bibliography{metrics}

\clearpage
\onecolumn

\begin{appendix}
	
\section*{\centering Supplementary Material: \\ MICo: Improved representations via sampling-based state similarity for Markov decision processes}

\section{Extended background material}
\label{sec:extendedBackground}
In this section we provide a more extensive background review.

\subsection{Reinforcement learning}

In this section we give a slightly more expansive overview of relevant key concepts in reinforcement learning, without the space constraints of the main paper.
Denoting by $\mathscr{P}(S)$ the set of probability distributions on a set $S$, we define a Markov decision process $(\mathcal{X}, \mathcal{A}, \gamma, P,
r)$ as:
\begin{itemize}[leftmargin=0.5cm,topsep=0pt,itemsep=0pt]
    \item A finite state space $\mathcal{X}$;
    \item A finite action space $\mathcal{A}$;
    \item A transition kernel $P : \mathcal{X} \times \mathcal{A}\rightarrow \mathscr{P}(\cX)$;
    \item A reward function $r : \mathcal{X} \times\mathcal{A} \rightarrow \mathbb{R}$;
    \item A discount factor $\gamma \in [0,1)$.
\end{itemize}
For notational convenience we introduce the notation $P_x^a \in \mathscr{P}(\mathcal{X})$ for the next-state distribution given state-action pair $(x, a)$, and $r_x^a$ for the corresponding immediate reward.

Policies are mappings from states to distributions
over actions: $\pi \in \mathscr{P}(\mathcal{A})^\mathcal{X}$ and induce a {\em value
function} $V^{\pi}:\cX\rightarrow\mathbb{R}$ defined via the recurrence:
\[ V^{\pi}(x) := \mathbb{E}_{a\sim\pi(x)}\left[ r_x^a + \gamma\mathbb{E}_{x'\sim
P_x^a}[V^{\pi}(x')]\right] \, . \]
It can be shown that this recurrence uniquely defines $V^\pi$ through a contraction mapping argument \citep{bertsekas1996neuro}.

The control problem is concerned with finding the optimal
policy
\[ \pi^* = \arg\max_{\pi\in\mathscr{P}(\mathcal{A})^\mathcal{X}}V^{\pi} \, . \]
It can be shown that while the optimisation problem above appears to have multiple objectives (one for each coordinate of $V^\pi$, there is in fact a policy $\pi^* \in \mathscr{P}(\mathcal{A})^\mathcal{X}$ that simultaneously maximises all coordinates of $V^\pi$, and that this policy can be taken to be deterministic; that is, for each $x \in \mathcal{X}$, $\pi(\cdot|x) \in \mathscr{P}(\mathcal{A})$ attributes probability 1 to a single action. 
In reinforcement learning in particular, we are often interested in finding, or approximating, $\pi^*$ from direct interaction with the MDP in question via sample trajectories, {\em without knowledge of $P$ or $r$} (and sometimes not even $\cX$).

\subsection{Metrics}\label{sec:app:metrics}
A {\em metric} $d$ on a set $X$ is a function $d:X\times X\rightarrow [0, \infty)$ respecting the following axioms for any $x, y, z \in X$:
\begin{enumerate}[leftmargin=0.5cm,topsep=0pt,itemsep=0pt]
    \item {\bf Identity of indiscernibles: } $d(x, y) = 0 \iff x = y$;
    \item {\bf Symmetry: } $d(x, y) = d(y, x)$;
    \item {\bf Triangle inequality: } $d(x, y) \leq d(x, z) + d(z, y)$.
\end{enumerate}

A {\em pseudometric} is similar, but the "identity of indiscernibles" axiom is weakened:
\begin{enumerate}[leftmargin=0.5cm,topsep=0pt,itemsep=0pt]
    \item $x = y \implies d(x, y) = 0$;
    \item $d(x, y) = d(y, x)$;
    \item $d(x, y) \leq d(x, z) + d(z, y)$.
\end{enumerate}
Note that the weakened first condition {\em does} allow one to have $d(x, y) = 0$ when $x\ne y$.

A {\em (pseudo)metric space} $(X, d)$ is defined as a set $X$ together with a (pseudo)metric $d$ defined on $X$.

\subsection{State similarity and bisimulation metrics}

Bisimulation is a fundamental notion of behavioural equivalence introduced by
Park and Milner \citep{Milner89} in the early 1980s in the context of
nondeterministic transition systems.  The probabilistic analogue was introduced
by \citet{Larsen91}.  The notion of an equivalence relation is not suitable to
capture the extent to which quantitative systems may resemble each other in
behaviour.  To provide a quantitative notion, bisimulation metrics were
introduced by \citet{Desharnais99b,Desharnais04} in the context of probabilistic
transition systems without rewards.  In reinforcement learning the reward is an
important ingredient, accordingly the \emph{bisimulation metric} for states of MDPs was
introduced by \citet{Ferns04}. Much work followed this initial introduction of bisimulation metrics into RL, as described in the main paper. We briefly reviewed the bisimulation metric in Section~\ref{sec:background}, and now provide additional detail around some of the key associated mathematical concepts.

Central to the definition of the bisimulation metric is the operator $T_k : \mathcal{M}(\mathcal{X}) \rightarrow \mathcal{M}(\mathcal{X})$, defined over $\mathcal{M}(\mathcal{X})$, the space of pseudometrics on $\mathcal{X}$. Pseudometrics were explored in more detail in Section~\ref{sec:app:metrics}. We now turn to the definition of the operator itself, given by
\begin{align*}
    T_k(d)(x, y) = \max_{a \in \mathcal{A}} [|r_x^a - r_y^a] + \gamma W_d(P^a_x, P^a_y)] \, ,
\end{align*}
for each $d \in \mathcal{M}(\mathcal{X})$, and each $x, y \in \mathcal{X}$. It can be verified that the function $T_K(d) : \mathcal{X} \times \mathcal{X} \rightarrow \mathbb{R}$ satisfies the properties of a pseudometric, so under this definition $T_K$ does indeed map $\mathcal{M}(\mathcal{X})$ into itself.

The other central mathematical concept underpinning the operator $T_K$ is the Wasserstein distance $W_d$ using base metric $d$. $W_d$ is formally a pseudometric over the set of probability distributions $\mathscr{P}(\mathcal{X})$, defined as the solution to an optimisation problem. The problem specifically is formulated as finding an optimal coupling between the two input probability distributions that minimises a notion of transport cost associated with $d$. Mathematically, for two probability distributions $\mu, \mu' \in \mathscr{P}(\mathcal{X})$, we have
\begin{align*}
    W_d(\mu, \mu') = \min_{\substack{(Z, Z') \\ Z \sim \mu, Z' \sim \nu'}} \mathbb{E}[d(Z, Z')] \, .
\end{align*}
Note that the pair of random variables $(Z, Z')$ attaining the minimum in the above expression will in general not be independent. That the minimum is actually attained in the above example in the case of a finite set $\mathcal{X}$ can be seen by expressing the optimisation problem as a linear program. Minima are obtained in much more general settings too; see \citet{villani08optimal}.

Finally, the operator $T_K$ can be analysed in a similar way to standard operators in dynamic programming for reinforcement learning. It can be shown that it is a contraction mapping with respect to the $L^\infty$ metric over $\mathcal{M}(\mathcal{X})$, and that $\mathcal{M}(\mathcal{X})$ is a complete metric space with respect to the same metric \citep{ferns2011bisimulation}. Thus, by Banach's fixed point theorem, $T_K$ has a unique fixed point in $\mathcal{M}(\mathcal{X})$, and repeated application of $T_K$ to any initial pseudometric will converge to this fixed point.

\subsection{Further details on diffuse and partial metrics}

The notion of a distance function having non-zero self distance was first
introduced by \citet{matthews_pm} who called it a \emph{partial metric}.  We
define it below:
\begin{definition}
	Given a set $\mathcal{X}$, a function $d:\mathcal{X}\times 
	\mathcal{X} \to \mathbb{R}$ is a partial metric if the following axioms hold:
	(i) $x=y \iff d(x,x)=d(y,y)=d(x,y)$ for any $x,y\in \mathcal{X}$;
	(ii) $d(x,x)\leq d(y,x)$ for any $x,y\in \mathcal{X}$;
	(iii) $d(x,y)= d(y,x)$ for any $x,y\in \mathcal{X}$;
	(iv) $d(x,y)\leq d(x,z)+d(y,z)-d(z, z)$ $\forall x,y,z\in \mathcal{X}$.
\end{definition}
This definition was introduced to recover a proper metric from the distance
function: that is, given a partial metric $d$, one is guaranteed that
$\tilde{d}(x,y)=d(x,y)-\frac{1}{2}\left(d(x,x)+d(y,y)\right)$ is a proper
metric.

The above definition is still too stringent for the Łukaszyk–Karmowski distance (and hence MICo distance), since it fails axiom 4 as shown in the following counterexample.

\begin{example}
	The Łukaszyk–Karmowski   distance does not satisfy the modified triangle inequality: let $\mathcal{X}$ be $[0,1]$, and $\rho$ be the Euclidean distance $|\cdot|$. Let $\mu$,$\nu$ be Dirac measures concentrated at 0 and 1, and let $\eta$ be $\frac{1}{2}(\delta_0+\delta_1)$. Then one can calculate that $d_{LK}(\rho)(\mu,\nu)=1$, while $d_{LK}(\rho)(\mu,\eta)+d_{LK}(\rho)(\nu,\eta)-d_{LK}(\rho)(\eta,\eta)=1/2$, breaking the inequality.
\end{example}

This naturally leads us to the notion of diffuse metrics defined in the main paper.

\section{Proof of Proposition 4.2}
\label{sec:missingProofs}

\micoContraction*

\begin{proof}
	Let $U, U' \in \mathbb{R}^{\mathcal{X}\times\mathcal{X}}$. Then note that
  \small
	\begin{align*}
    |(T^\pi U)(x, y) - (T^\pi U')(x, y)|  = \left|\gamma\sum_{x', y'}\pi(a|x)\pi(b|y)P_x^a(x')P_y^b(y') (U - U')(x', y') \right| \leq \gamma \|U - U'\|_\infty \, .
	\end{align*}
  \normalsize
	for any $x,y \in \mathcal{X}$, as required.
\end{proof}

\section{Experimental details}

We will first describe the regular network and training setup for these agents so as to facilitate the description of our loss.

\subsection{Baseline network and loss description}

The networks used by Dopamine for the ALE consist of 3 convolutional layers followed by two fully-connected layers (the output of the networks depends on the agent). We denote the output of the convolutional layers by $\phi_{\omega}$ with parameters $\omega$, and the remaining fully connected layers by $\psi_{\xi}$ with parameters $\xi$. Thus, given an input state $x$ (e.g. a stack of 4 Atari frames), the output of the network is $Q_{\xi,\omega}(x, \cdot) = \psi_{\xi}(\phi_{\omega}(x))$. Two copies of this network are maintained: an {\em online} network and a {\em target} network; we will denote the parameters of the target network by $\bar{\xi}$ and $\bar{\omega}$. During learning, the parameters of the online network are updated every 4 environment steps, while the target network parameters are synced with the online network parameters every 8000 environment steps.
We refer to the loss used by the various agents considered as $\cL_{\text{TD}}$; for example, for DQN this would be:
\[ \cL_{\text{TD}}(\xi, \omega) := \mathbb{E}_{(x, a, r, x')\sim\cD}\left[\rho\left(r + \gamma\max_{a'\in\cA}Q_{\bar{\xi},\bar{\omega}}(x', a') - Q_{\xi, \omega}(x, a) \right) \right] \, , \]
where $\cD$ is a replay buffer with a capacity of 1M transitions, and $\rho$ is the Huber loss.

\subsection{MICo loss description}

We will be applying the MICo loss to $\phi_{\omega}(x)$. As described in \autoref{sec:loss}, we express the distance between two states as:
\[ U_{\omega}(x, y) = \frac{\| \phi_{\omega}(x) \|^2_2 + \| \phi_{\bar{\omega}}(y) \|^2_2 }{2} + \beta \theta(\phi_{\omega}(x), \phi_{\bar{\omega}}(y)) \, , \]
where $\theta(\phi_\omega(x), \phi_{\bar{\omega}}(y))$ is the angle between vectors $\phi_\omega(x)$ and $\phi_{\bar{\omega}}(y)$ and $\beta$ is a scalar.
Note that we are using the target network for the $y$ representations; this was done for learning stability. We used $\beta=0.1$ for the results in the main paper, but present some results with different values of $\beta$ below.

In order to get a numerically stable operation, we implement the angular distance between representations $\phi_\omega(x)$ and $\phi_\omega(y)$ according to the calculations
\begin{align*}
	\text{CS}(\phi_\omega(x), \phi_\omega(y)) & = \frac{\langle \phi_\omega(x), \phi_\omega(y)\rangle }{\|\phi_\omega(x)\|\|\phi_\omega(y)\|} \\
	\theta(\phi_\omega(x), \phi_\omega(y)) & = \arctantwo\left(\sqrt{1 - \text{CS}(\phi_\omega(x), \phi_\omega(y))^2}, \text{CS}(\phi_\omega(x), \phi_\omega(y))\right) \, .
\end{align*}

Based on \autoref{eqn:uUpdate}, our learning target is then (note the target network is used for both representations here):
\[ T^U_{\bar{\omega}}(r_x, x', r_y, y') = |r_x - r_y| + \gamma U_{\bar{\omega}}(x', y') \, , \]
and the loss is
\begin{align*}
  \mathcal{L}_{\text{MICo}}(\omega) = & \mathbb{E}_{\begin{subarray}{l}\langle x, r_x, x'\rangle \\  \langle y, r_y, y'\rangle \end{subarray}\sim\cD}\left[ \left(T^U_{\bar{\omega}}(r_x, x', r_y, y') - U_{\omega}(x, y)\right)^2 \right] \, ,
\end{align*}

As mentioned in \autoref{sec:loss}, we use the same mini-batch sampled for $\cL_{\text{TD}}$ for computing $\cL_{\text{MICo}}$. Specifically, we follow the method introduced by \citet{castro2020scalable} for constructing new matrices that allow us to compute the distances between all pairs of sampled states (see code for details on matrix operations). 
Our combined loss is then $\cL_{\alpha}(\xi, \omega) = (1-\alpha)\mathcal{L}_{\text{TD}}(\xi, \omega) + \alpha\mathcal{L}_{\text{MICo}}(\omega)$.

\subsection{Hyperparameters for soft actor-critic}
We re-implemented the DBC algorithm from \citet{zhang2021invariant} on top of the Soft Actor-Critic algorithm \citep{haarnoja18sac}  provided by the Dopamine library \citep{castro18dopamine}. We compared the following algorithms, using the same hyperparameters for all\footnote{See https://github.com/google-research/google-research/tree/master/mico for all hyperparameter settings.}:

\begin{enumerate}[leftmargin=0.5cm]
  \item {\bf SAC:} This is Soft Actor-Critic \citep{haarnoja18sac} with the convolutional encoder described by \citet{yarats19improving}.
  \item {\bf DBC:} This is DBC, based on SAC, as described by \citet{zhang2021invariant}.
  \item {\bf DBC-Det:} In the code provided by \citet{zhang2021invariant}, the default setting was to assume deterministic transitions (which is an easier dynamics model to learn), so we decided to compare against this version as well. It is interesting to note that the performance is roughly the same as for DBC.
  \item {\bf MICo:} This a modified version of SAC, adding the MICo loss to the output of the encoder. Note that the encoder output is the same one used by DBC for their dynamics and reward models.
  \item {\bf DBC+MICo:} Instead of using the bisimulation loss of \citet{zhang2021invariant}, which relies on the learned dynamics model, we use our MICo loss. We kept all other components untouched (so a dynamics and reward model were still being learned).
\end{enumerate}

It is worth noting that some of the hyperparameters we used differ from those listed in the code provided by \citet{zhang2021invariant}; in our experiments they hyperparameters for all agents are based on the default SAC hyperparameters in the Dopamine library \citep{castro18dopamine}.

For the ALE experiments we used the ``squaring'' of the sampled batches introduced by \citet{castro2020scalable} (where all pairs of sampled states are considered). However, the implementation provided by \citet{zhang2021invariant} instead created a copy of the sampled batch of transitions and shuffled them; we chose to follow this setup for the SAC-based experiments. Thus, while in the ALE experiments we are comparing $m^2$ pairs of states (where $m$ is the batch size) at each training step, in the SAC-based experiments we are only comparing $m$ pairs of states.

The aggregate results are displayed in \autoref{fig:humanNormalizedIQM}, and per-environment results in \autoref{fig:sacAllEnvs} below.

\section{Additional experimental results}
\subsection{Additional state feature results}
The results shown in \autoref{fig:stateFeatures} are on the well-known four-rooms GridWorld \citep{sutton99between}. We provide extra experiments in \autoref{fig:stateFeatures2}.

\begin{figure}[!h]
	\centering
	\includegraphics[keepaspectratio,width=.48\textwidth]{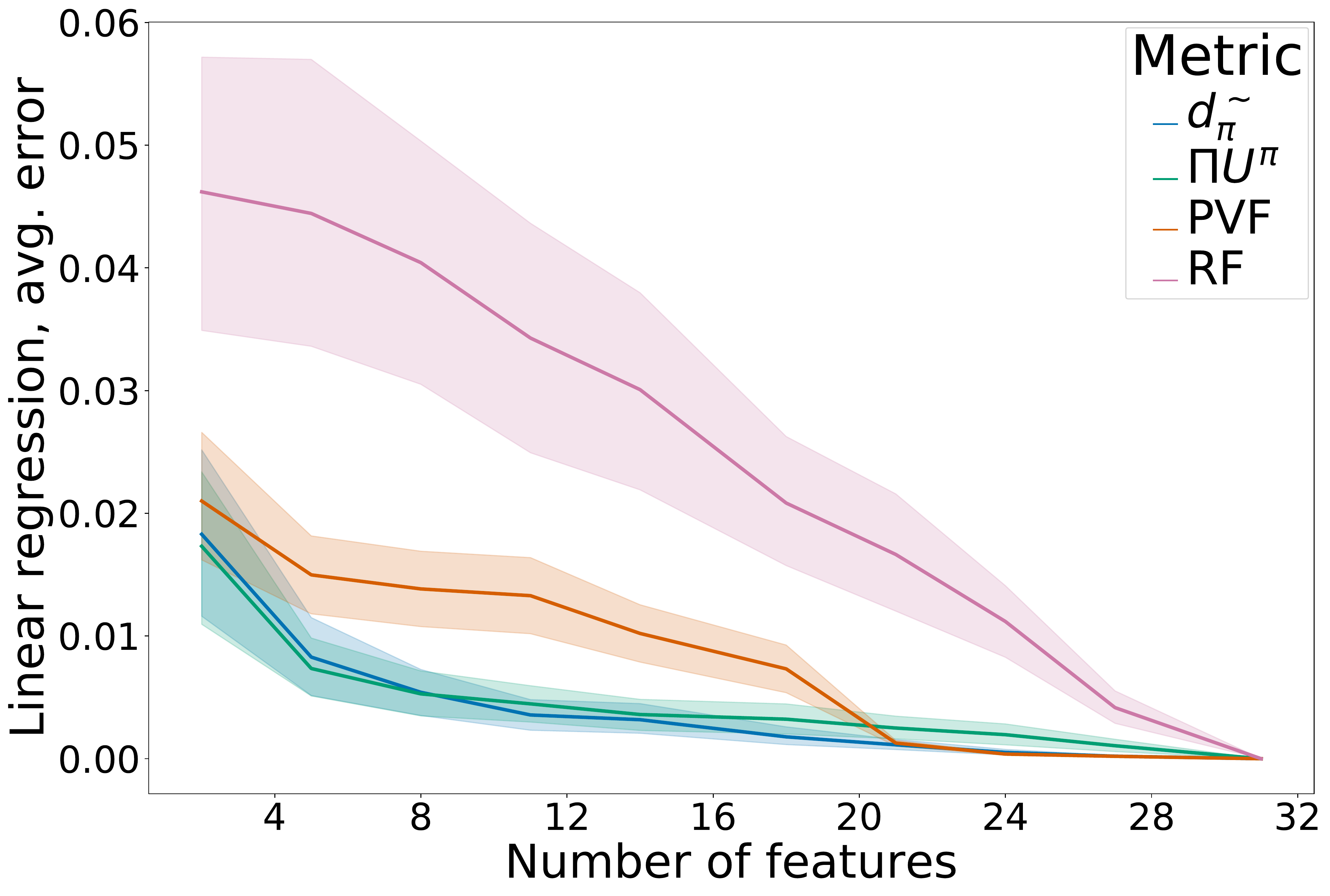}
	\includegraphics[keepaspectratio,width=.48\textwidth]{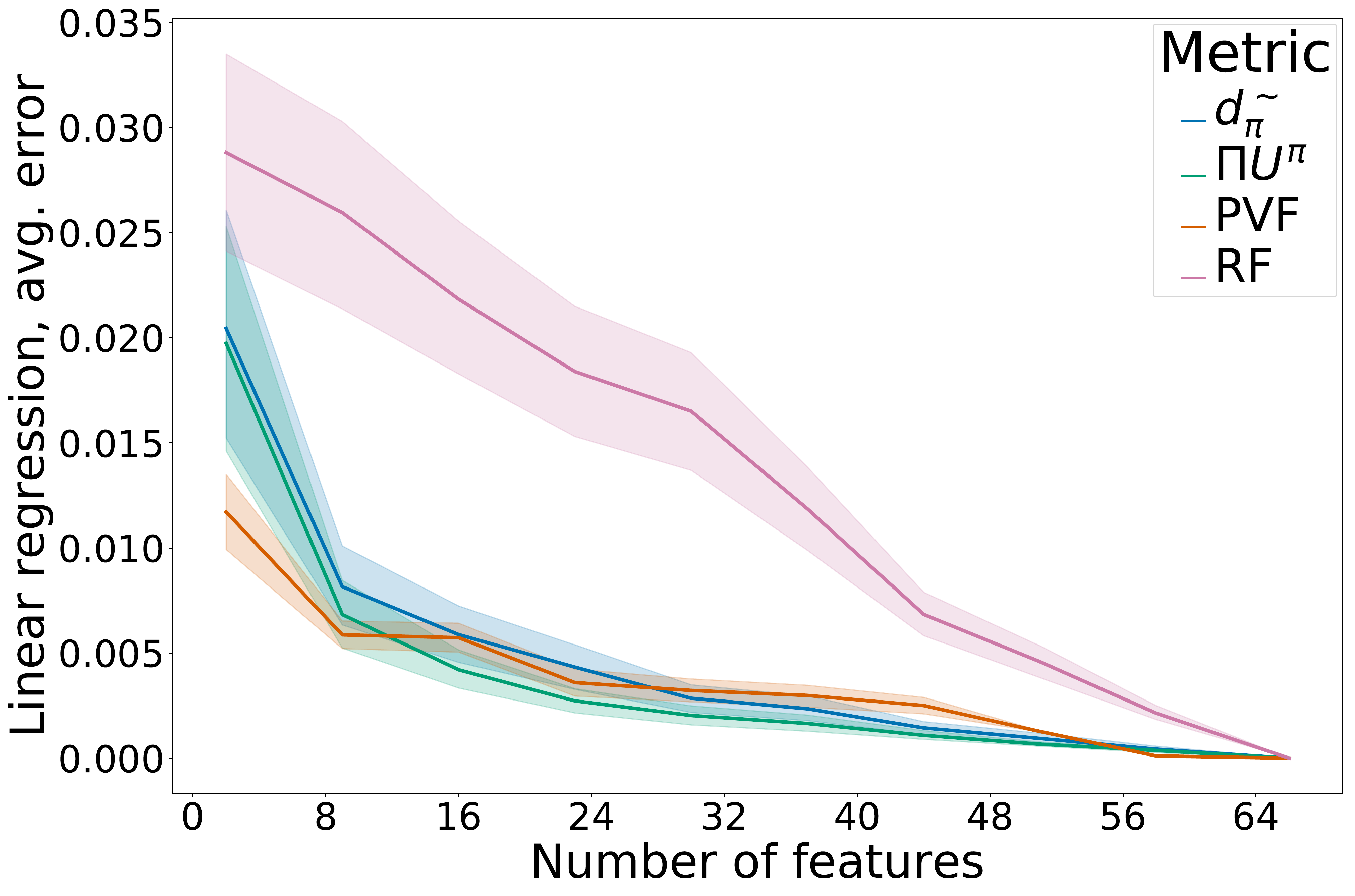}
  \caption{Average error when performing linear regression on varying numbers of features on the mirrored rooms introduced by \citet{castro2020scalable} (left) and the grid task introduced by \citet{dayan93improving} (right). Averaged over 10 runs; shaded areas represent 95\% confidence intervals.}
  \label{fig:stateFeatures2}
\end{figure}

\subsection{Complete ALE experiments}
\begin{figure*}[!t]
	\centering
	\includegraphics[keepaspectratio,width=\textwidth]{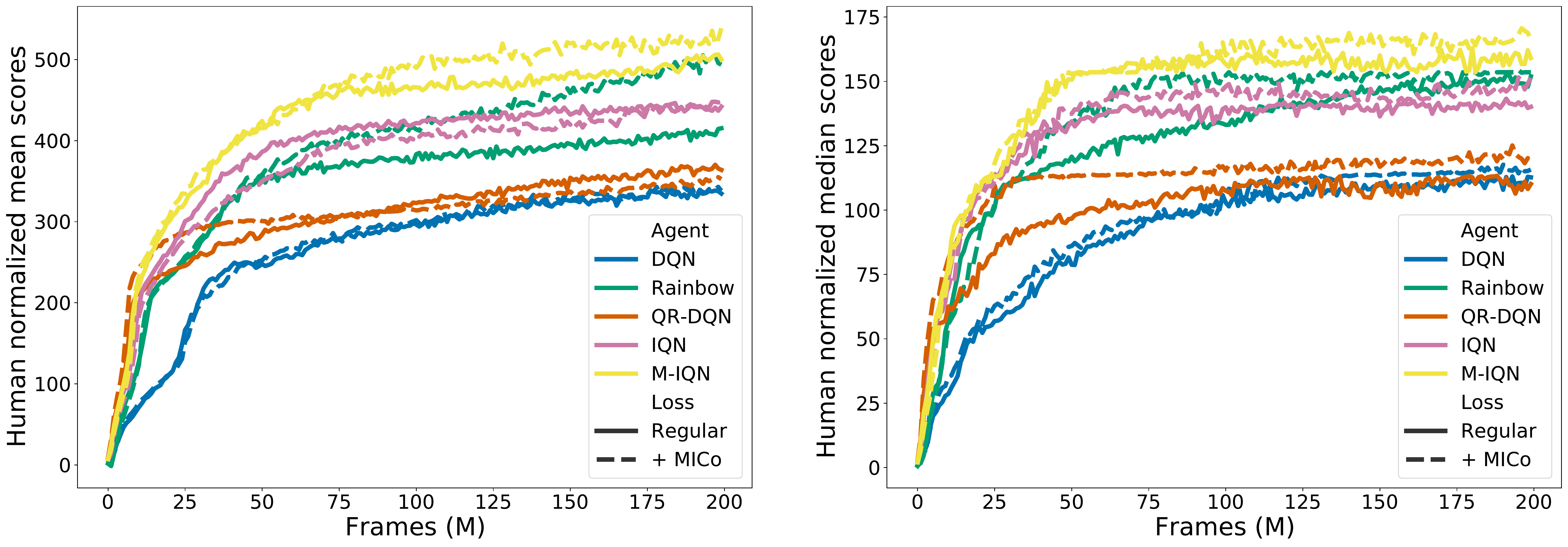}
  \caption{Mean (left) and median (right) human normalized scores across 60 Atari 2600 games, averaged over 5 independent runs.}
  \vspace{-0.5cm}
	\label{fig:humanNormalizedJoined}
\end{figure*}

We additionally provide complete results for all the agents in \autoref{fig:dqn_all_games}, \autoref{fig:rainbow_all_games}, \autoref{fig:quantile_all_games}, \autoref{fig:iqn_all_games}, and \autoref{fig:miqn_all_games}.

\begin{figure*}[!h]
  \centering
	\includegraphics[keepaspectratio,width=\textwidth]{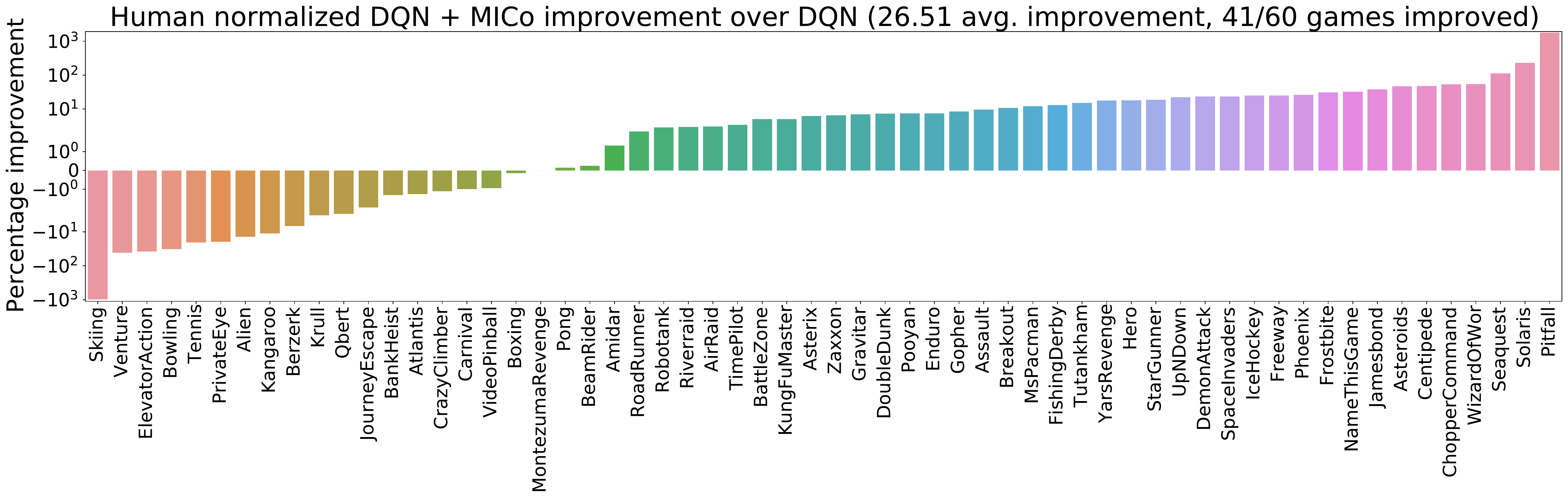}
	\includegraphics[keepaspectratio,width=\textwidth]{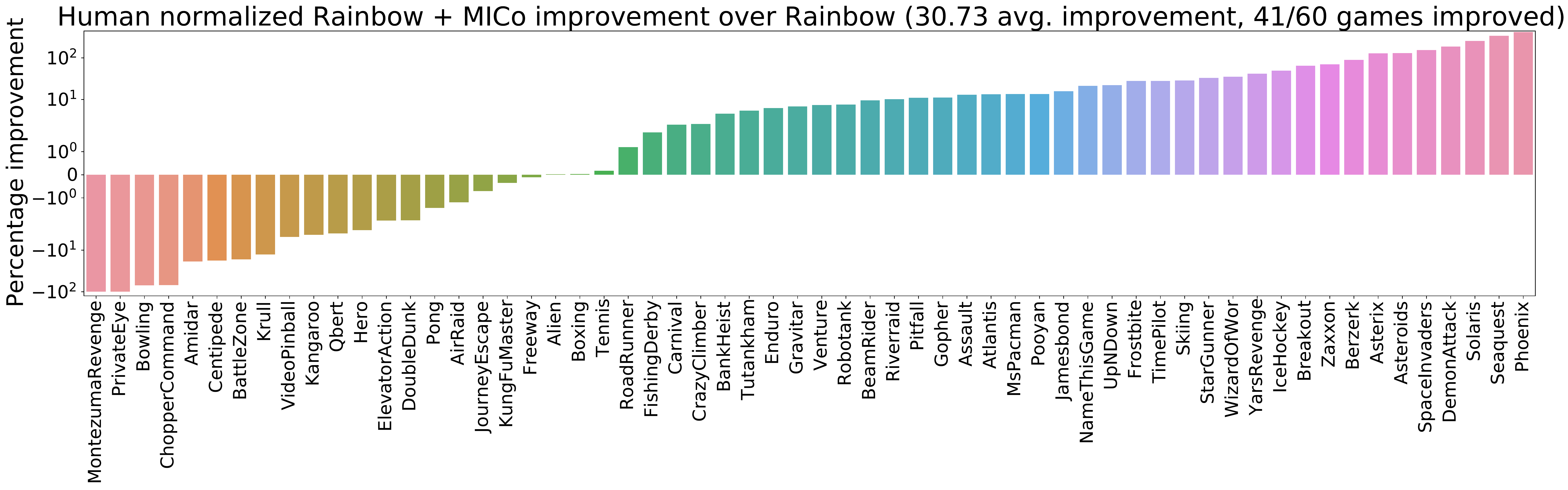}
	\includegraphics[keepaspectratio,width=\textwidth]{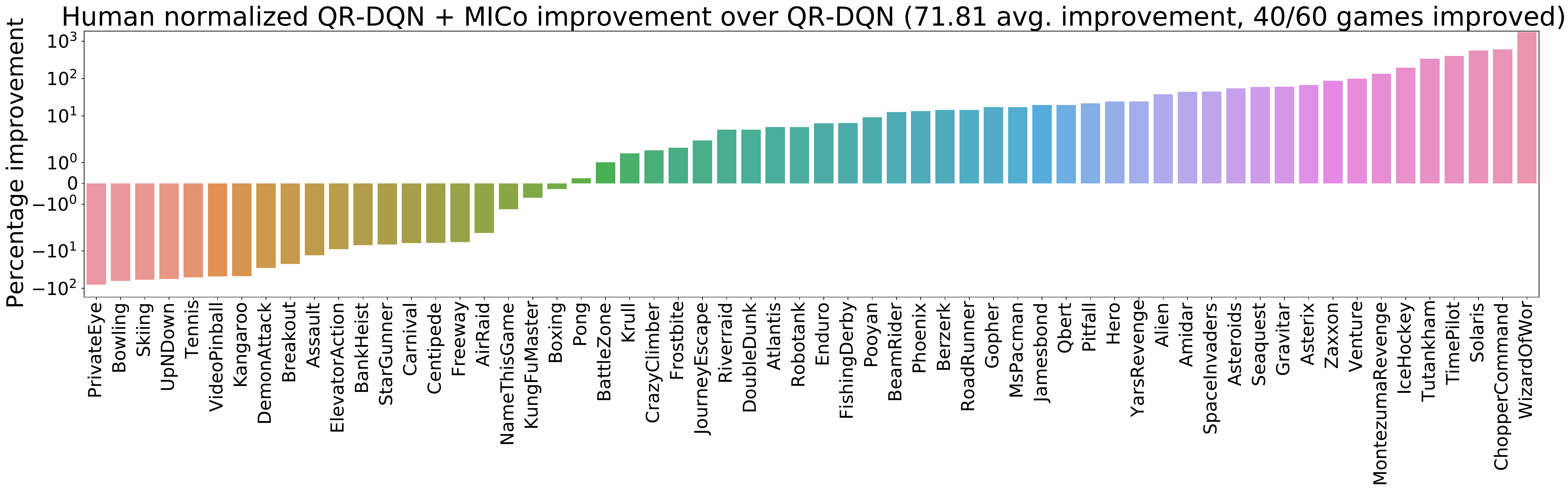}
	\includegraphics[keepaspectratio,width=\textwidth]{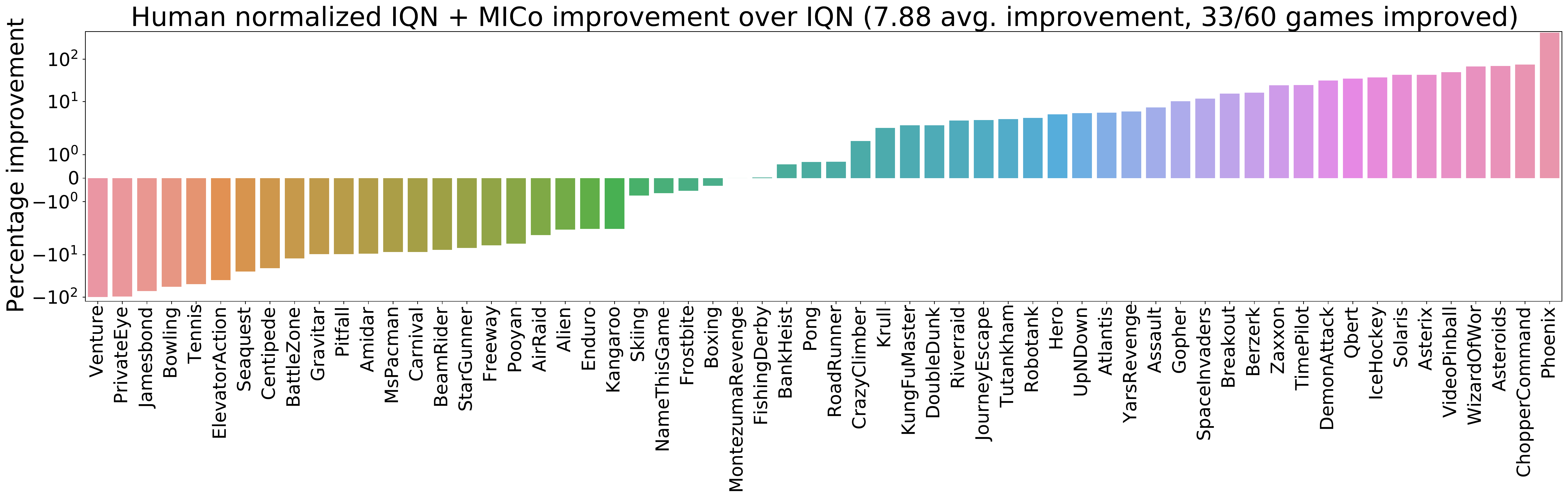}
  \includegraphics[keepaspectratio,width=\textwidth]{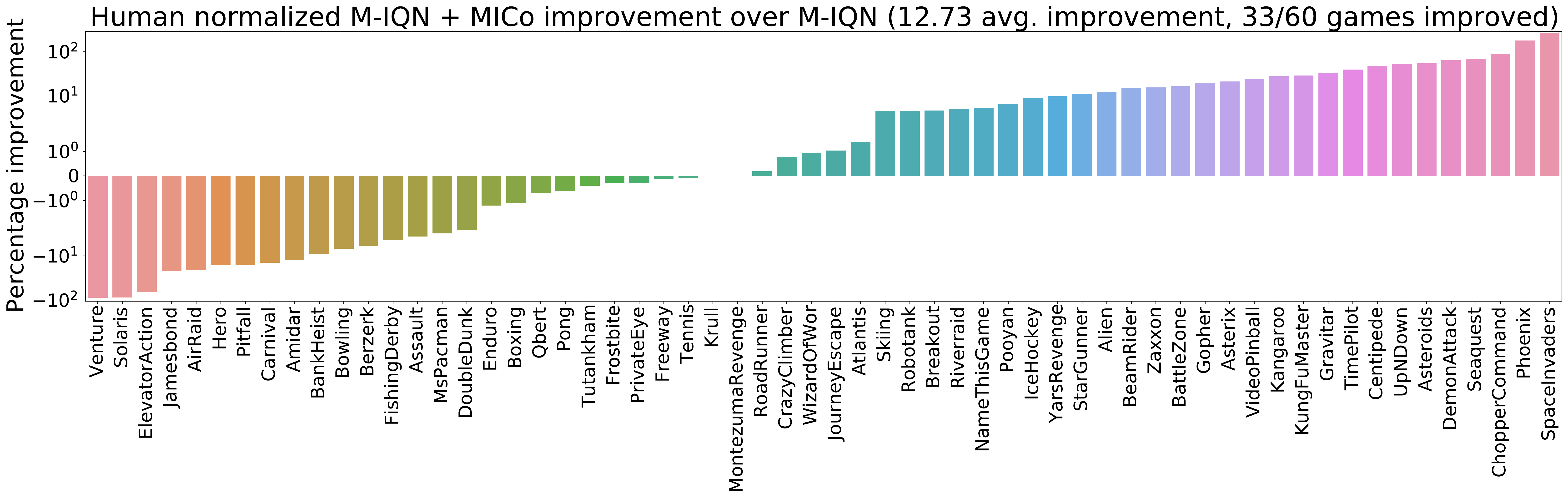}
  \caption{From top to bottom, percentage improvement in returns (averaged over the last 5 iterations) when adding $\mathcal{L}_{\text{MICo}}$ to DQN, Rainbow, QR-DQN, IQN, and M-DQN. The results for are averaged over 5 independent runs.}
  \label{fig:atariBarPlots}
\end{figure*}

\begin{figure*}[!h]
	\centering
	\includegraphics[keepaspectratio,width=0.95\textwidth]{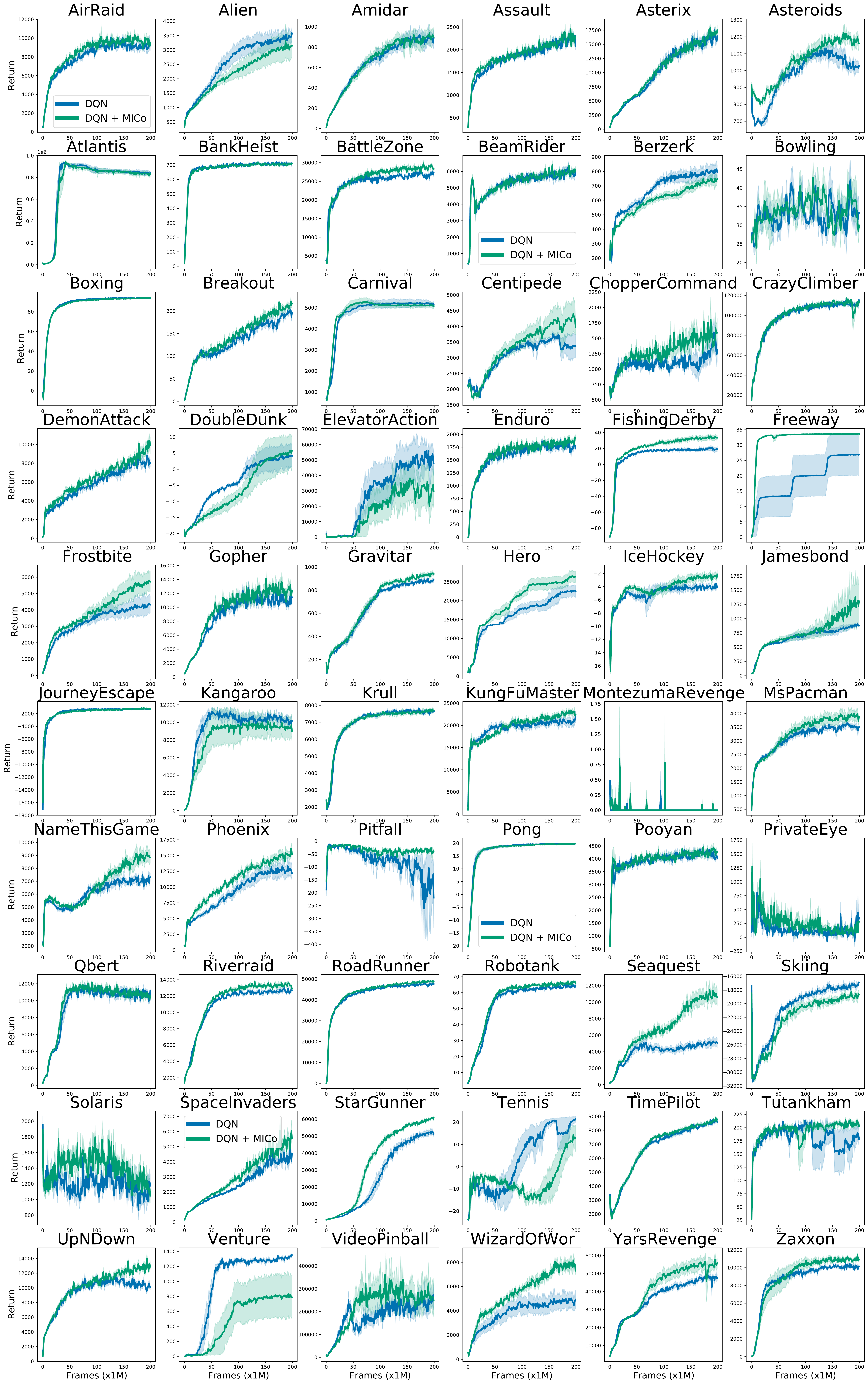}
	\caption{Training curves for DQN agents. The results for all games and agents are over 5 independent runs, and shaded regions report 75\% confidence intervals.}
	\label{fig:dqn_all_games}
\end{figure*}

\begin{figure*}[!h]
	\centering
	\includegraphics[keepaspectratio,width=0.95\textwidth]{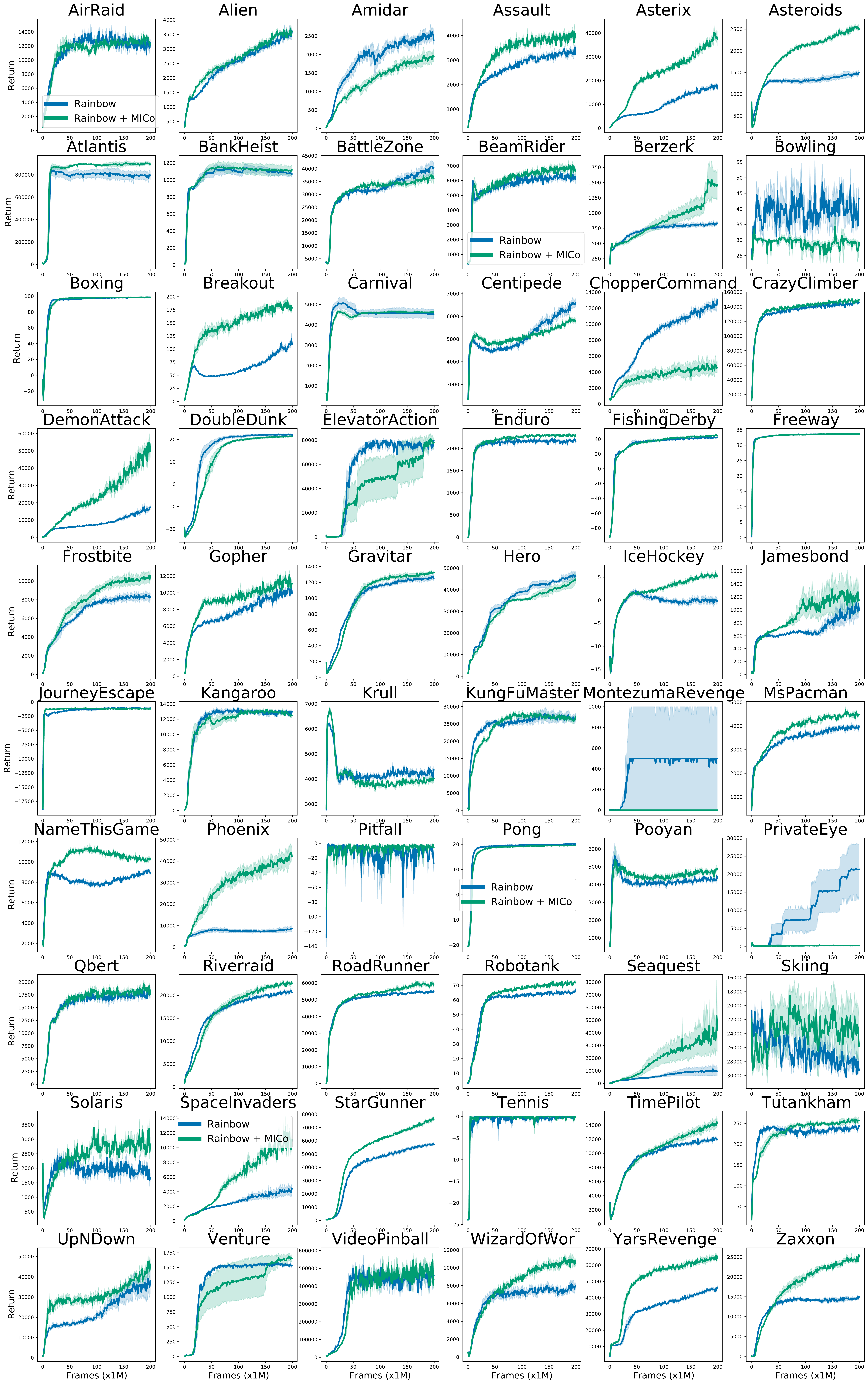}
	\caption{Training curves for Rainbow agents. The results for all games and agents are over 5 independent runs, and shaded regions report 75\% confidence intervals.}
	\label{fig:rainbow_all_games}
\end{figure*}

\begin{figure*}[!h]
	\centering
	\includegraphics[keepaspectratio,width=0.95\textwidth]{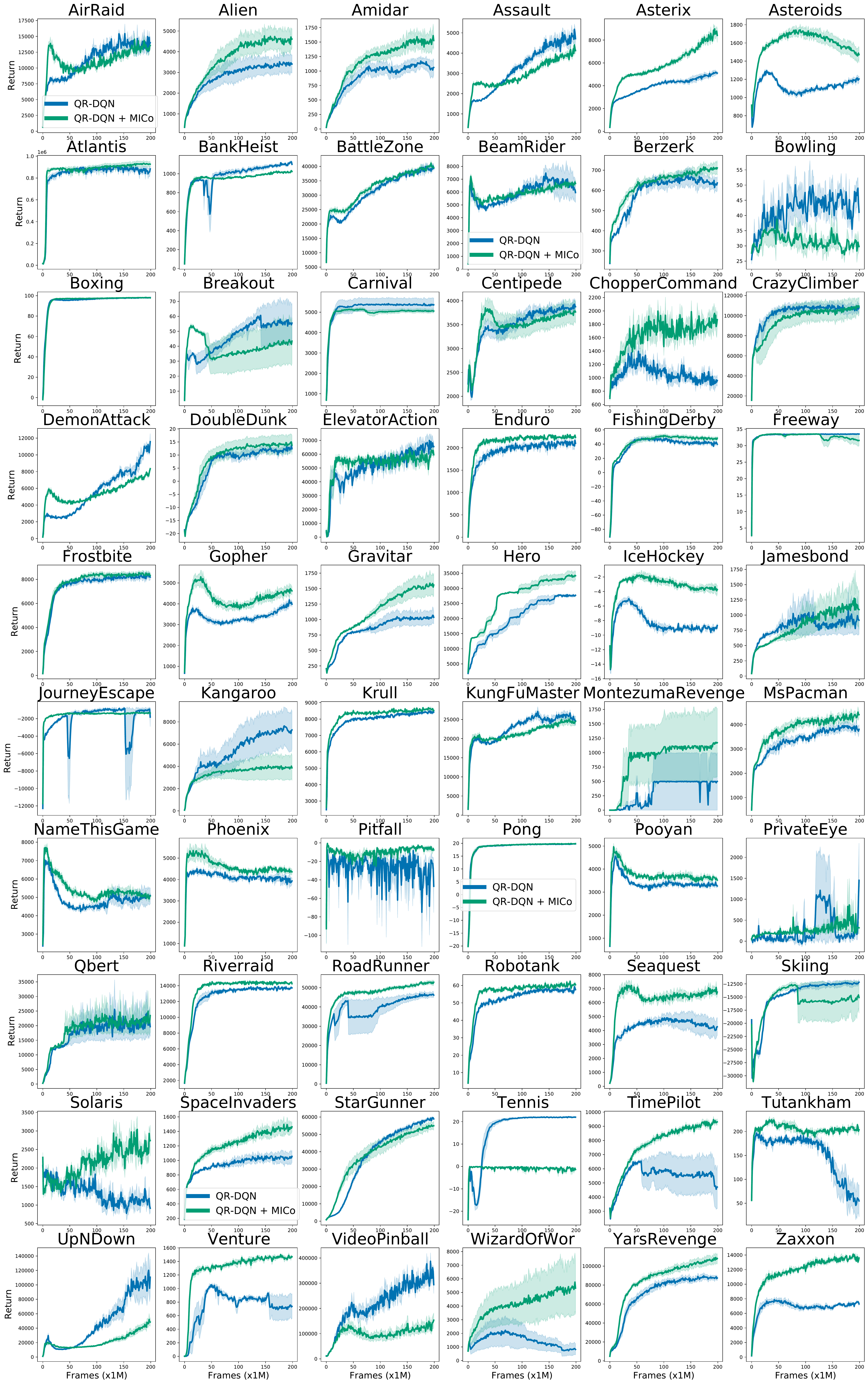}
	\caption{Training curves for QR-DQN agents. The results for all games and agents are over 5 independent runs, and shaded regions report 75\% confidence intervals.}
	\label{fig:quantile_all_games}
\end{figure*}

\begin{figure*}[!h]
	\centering
	\includegraphics[keepaspectratio,width=0.95\textwidth]{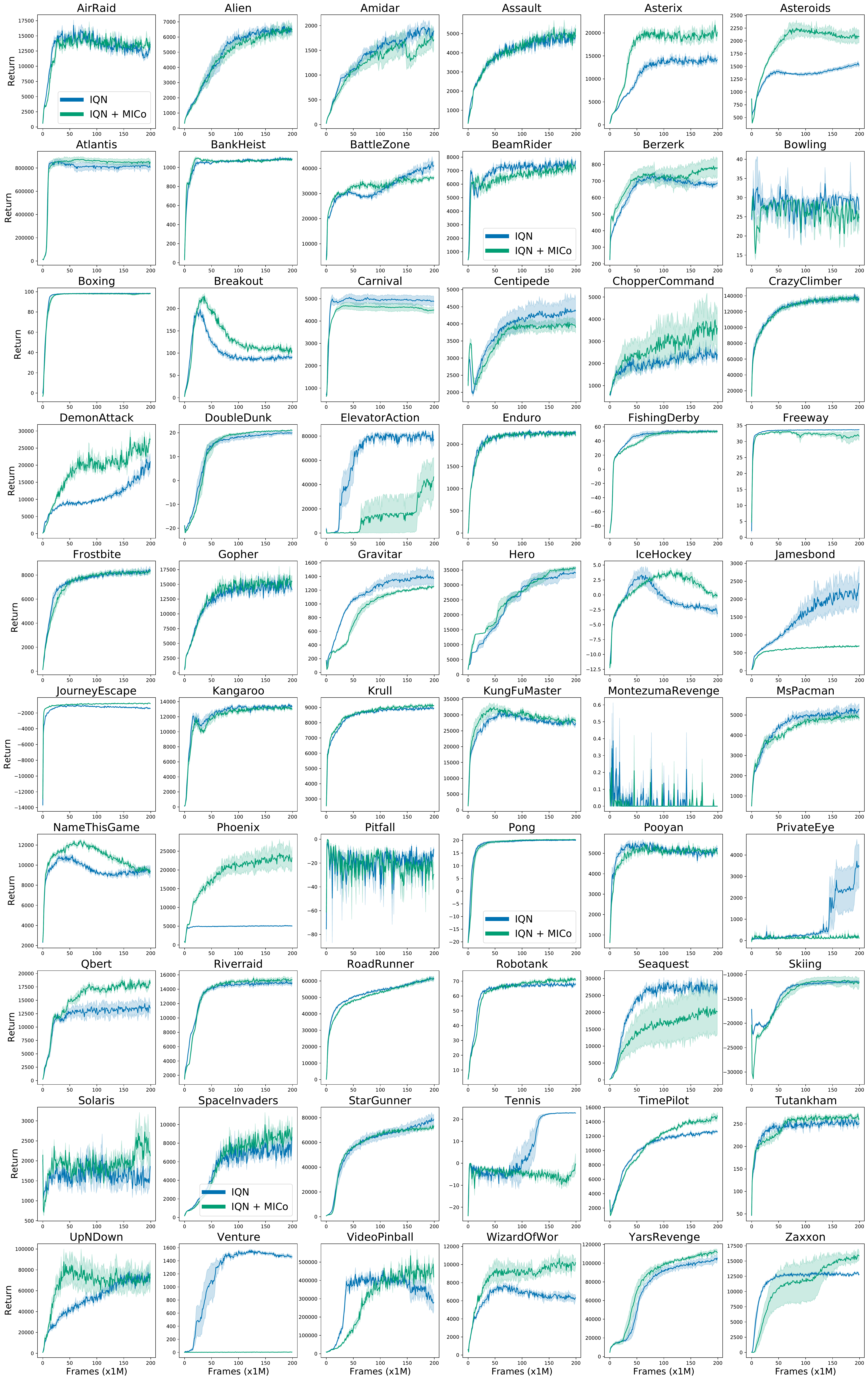}
	\caption{Training curves for IQN agents. The results for all games and agents are over 5 independent runs, and shaded regions report 75\% confidence intervals.}
	\label{fig:iqn_all_games}
\end{figure*}

\begin{figure*}[!h]
	\centering
	\includegraphics[keepaspectratio,width=0.95\textwidth]{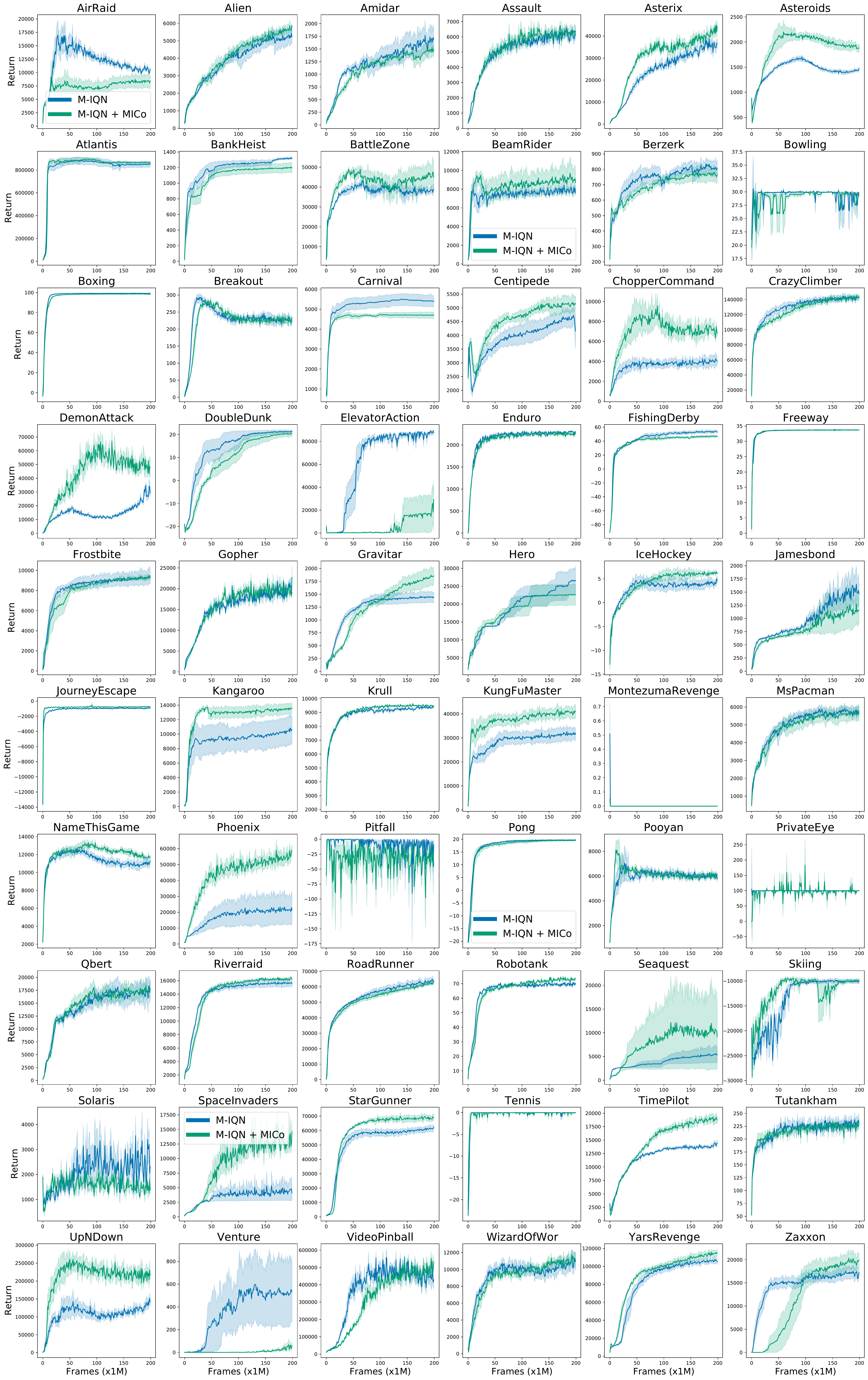}
	\caption{Training curves for M-IQN agents. The results for all games and agents are over 5 independent runs, and shaded regions report 75\% confidence intervals.}
	\label{fig:miqn_all_games}
\end{figure*}

\subsection{Sweep over $\alpha$ and $\beta$ values}

In \autoref{fig:sweep} we demonstrate the performance of the MICo loss when
added to Rainbow over a number of different values of $\alpha$ and $\beta$.
For each agent, we ran a similar hyperparameter sweep over $\alpha$ and $\beta$
on the same six games displayed in \autoref{fig:sweep} to determine settings to
be used in the full ALE experiments.

\begin{figure*}[!h]
	\centering
	\includegraphics[keepaspectratio,width=0.95\textwidth]{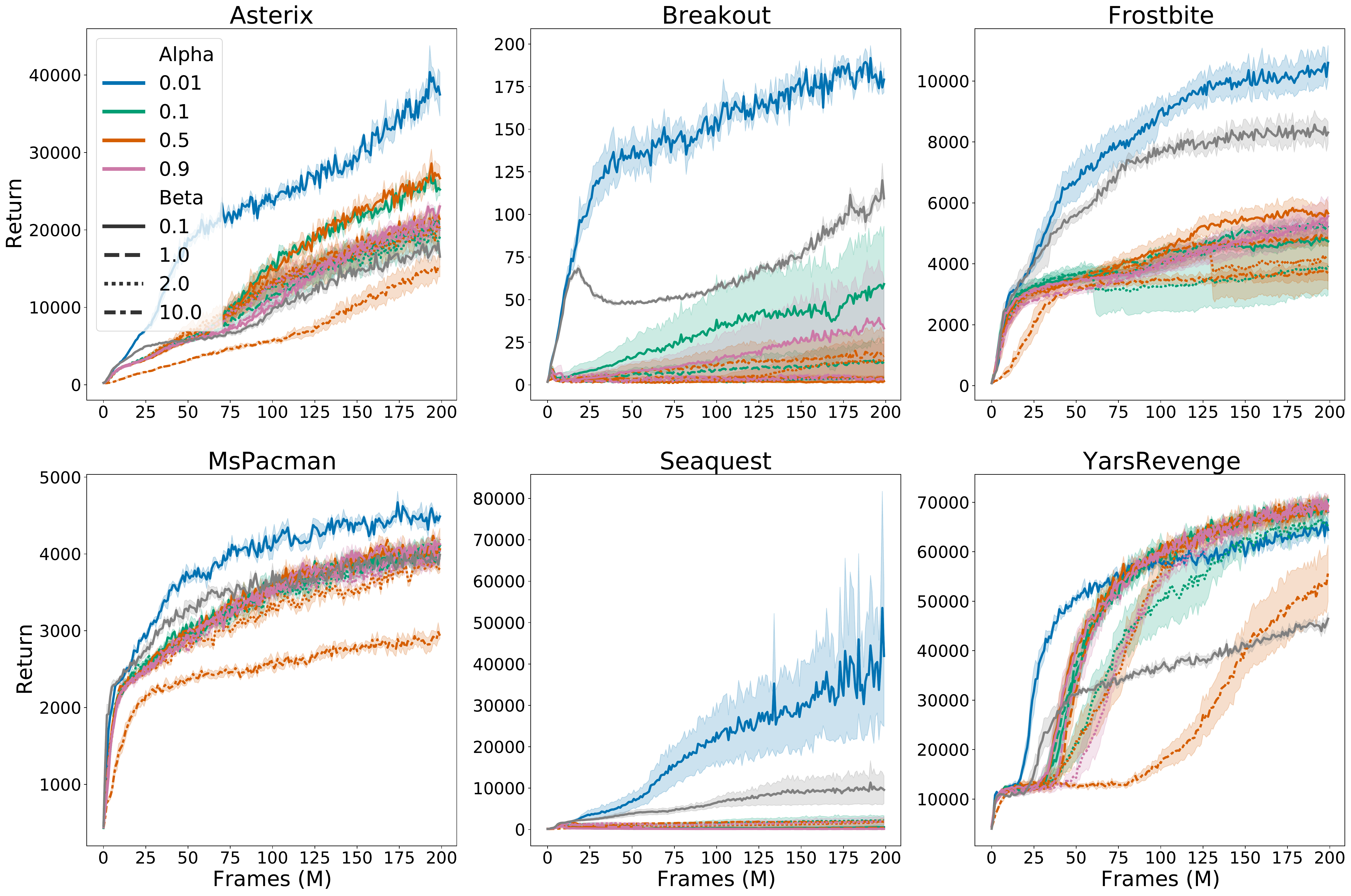}
	\caption{Sweeping over various values of $\alpha$ and $\beta$ when adding the MICo loss to Rainbow. The grey line represents regular Rainbow.}
	\label{fig:sweep}
\end{figure*}

\subsection{Complete DM-Control results}

Full per-environment results are provided in \autoref{fig:sacAllEnvs}.

\begin{figure*}[!h]
	\centering
	\includegraphics[keepaspectratio,width=0.95\textwidth]{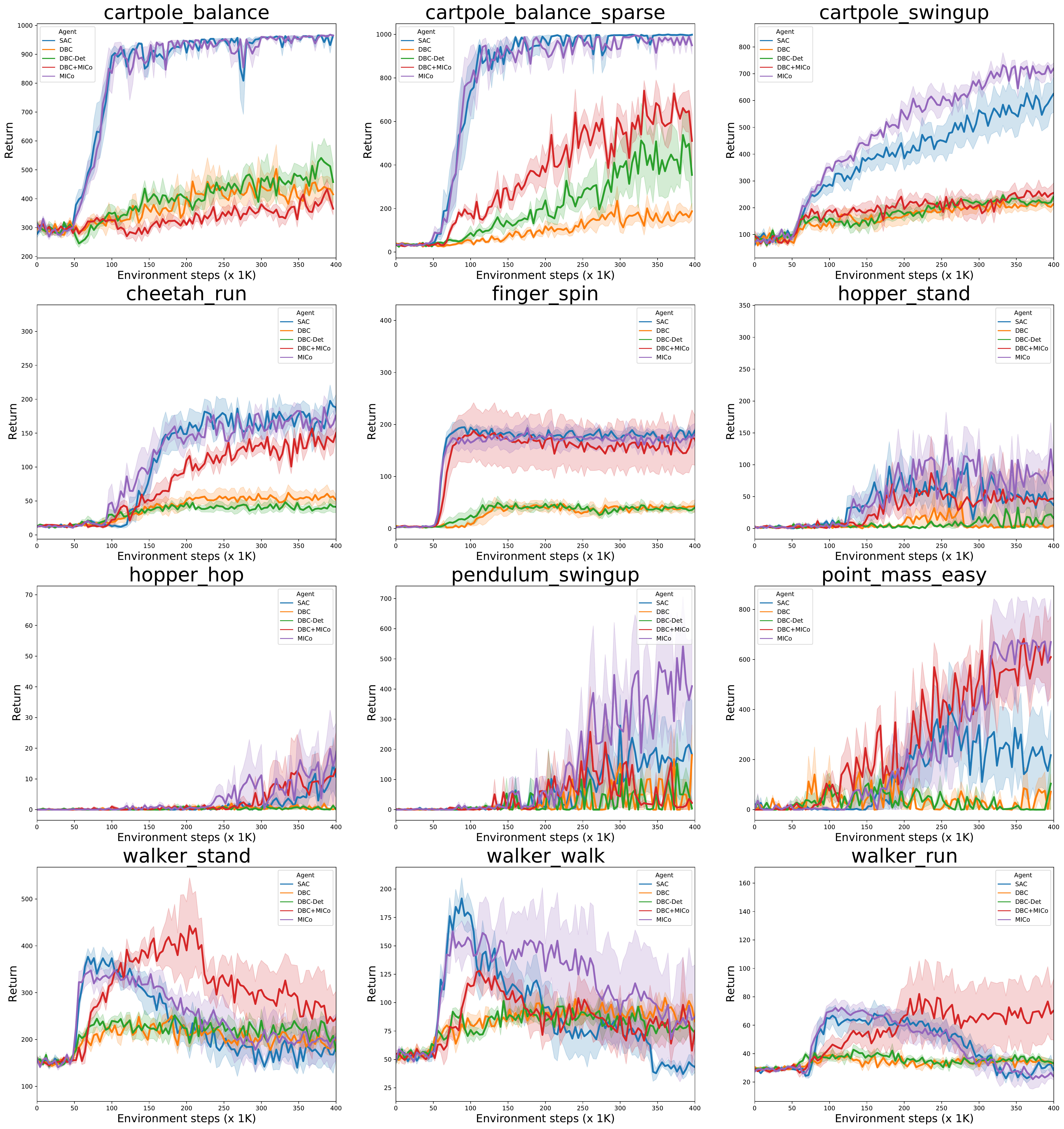}
  \caption{Comparison of all agents on twelve of the DM-Control suite. Each algorithm and environment was run for five independent seeds, and the shaded areas report 75\% confidence intervals.}
	\label{fig:sacAllEnvs}
\end{figure*}

\subsection{Compute time and infrastructure}
For \autoref{fig:valueBoundGap} each run took approximately 10 minutes. For \autoref{fig:stateFeatures} and \autoref{fig:stateFeatures2} the running time varied for each environment and per metric but a conservative estimate is 30 minutes per run. All GPU experiments were run on NVIDIA Tesla P100 GPUs. Each Atari game takes approximately 5 days (300 hours) to run for 200M frames.

\end{appendix}

\end{document}